\documentclass[journal]{IEEEtran}
\usepackage{pepe_macros}
%
%
%
%
%
%
%
%

\begin{document}

\title{Compression and Interpretability of Deep Neural Networks via Tucker Tensor Layer: From First Principles to Tensor Valued Back-Propagation}

\author{Giuseppe G. Calvi, \textit{Student Member, IEEE}, Ahmad Moniri, Mahmoud Mahfouz, Qibin Zhao, \textit{Senior Member, IEEE},  Danilo P. Mandic, \textit{Fellow, IEEE}}

\markboth{Journal of \LaTeX\ Class Files,~Vol.~6, No.~1, January~2007}%
{Shell \MakeLowercase{\textit{et al.}}: Bare Demo of IEEEtran.cls for Journals}

\maketitle

\begin{abstract}

This work aims to help resolve the two main stumbling blocks in the application of Deep Neural Networks (DNNs), that is, the exceedingly large number of trainable parameters and their physical interpretability. This is achieved through a tensor valued approach, based on the proposed Tucker Tensor Layer (TTL), as  an alternative to the dense weight-matrices of DNNs. This allows us to treat the weight-matrices of general DNNs as a matrix unfolding of a higher order weight-tensor. By virtue of the compression properties of tensor decompositions,  this enables us to introduce a novel and efficient framework for exploiting the multi-way nature of the weight-tensor in order to dramatically reduce the number of DNN parameters. We also derive the tensor valued back-propagation algorithm within the TTL framework, by extending the notion of matrix derivatives to tensors. In this way, the physical interpretability of the Tucker decomposition is exploited to gain physical insights into the NN training, through the process of computing gradients with respect to each factor matrix. The proposed framework is validated on both synthetic data, and the benchmark datasets MNIST, Fashion-MNIST, and CIFAR-10. Overall, through the ability to provide the relative importance of each data feature in training, the TTL back-propagation is shown to help  mitigate the ``black-box" nature inherent to NNs. Experiments also illustrate that the TTL achieves a 66.63-fold compression on MNIST and Fashion-MNIST, while, by simplifying the VGG-16 network, it achieves a 10\% speed up in training time, at a comparable performance.


\end{abstract}

\begin{IEEEkeywords} 
Tucker Tensor Layer, Tucker decomposition, Compression, Tensor Valued Back-propagation, Neural Networks
\end{IEEEkeywords}

\IEEEpeerreviewmaketitle

\section{Introduction}

Deep Neural Networks (DNNs) are the state-of-the-art machine-learning methodology which has delivered improved performance in numerous areas involving large-scale data, such as computer vision, speech recognition, and time-series analysis \cite{Lecun2015}. Within DNNs, Recurrent Neural Networks (RNNs) are among the most successful machine learning approaches for sequence modelling \cite{Graves2013, Mandic2001}, while Convolutional Neural Networks (CNNs) are particularly well-suited for image classification \cite{Krizhevsky2012, Simonyan2014}. However, despite the involvement of specialised high performance hardware, the application of DNNs has also highlighted the necessity to reduce the typically very long processing times, as DNN models often comprise thousands of nodes and millions of learning parameters \cite{Russakovsky2015}.

One way to tackle this issue is by employing a tensor approach to reduce the sheer scale of the DNN learning problem through the super-compression ability inherent to tensors. Indeed, owing to their compression properties, tensor decompositions (TDs) are just emerging as means  to optimise NNs, by approximating an original ``raw" network with its low-rank tensor representation which exhibits several orders of magnitude fewer parameters, while minimally affecting its overall performance.

Tensors represent a generalization of matrices and vectors, which benefit from the power of multilinear algebra to flexibly and efficiently account for multi-way relationships in data while at the same time preserving the structural information \cite{Mandic2015, Kolda2009}. Similarly to their matrix counterpart, latent factors in tensors can be extracted via TDs; the most commonly used are the Canonical Polyadic \cite{Bro1997}, the Tensor Train \cite{Oseledets2011}, and the Tucker \cite{Tucker1963} decompositions (CPD, TT, and TKD, respectively). In \cite{Novikov2015}, the TT format was employed to efficiently represent the dense weight matrix of the fully connected layers NNs, requiring only a few parameters. The ability of this approach to super-compress the NN weight matrix at a little sacrifice in accuracy was demonstrated on standard datasets. The work in  \cite{Kossaifi2017} proposed the Tensor Contraction Layer (TCL) and Tensor Regression Layer (TRL) approaches, to reduce input dimensionality while preserving its multilinear tensor structure, and expressing the outputs as a mapping between two tensors. This work was extended in \cite{Cao2017} with low-rank constraints imposed on the TRL, while the work in \cite{Lebedev2014} tackled the question of improving training time by merging CNNs and the CPD.

Further research in the area, in particular involving the combination of the TKD with NNs, includes the work in \cite{Kim2015} and \cite{Zhong2019}. In \cite{Kim2015} the TKD was used as a convolution kernel for mobile applications, while the authors in \cite{Zhong2019} adopt a more theoretical approach and propose a method to adaptively adjust the dimensions of the weight tensor in a layer-wise fashion. Both methods have been shown to achieve significant NN parameter compression and  consequent reduction in training time.

Despite these advances, there is still a void in the literature when it comes to directly deriving the back-propagation algorithm for DNNs in terms of tensor latent factors within TDs, while at the same time providing physical meaning to the model. The authors of the seminal paper \cite{Novikov2015} are the only to have analytically re-derived back-propagation \cite{Rumelhart1986} in terms of the tensor factors stemming from the TT decomposition, however, the dimensionality and order of the employed TTs were arbitrarily chosen, so that, despite the achieved compression, the results were not physically interpretable. Most other research in the field tends to rely on automatic differentiation for back-propagation \cite{Kossaifi2017, Cao2017, Kim2015, Zhong2019}. Although this is convenient for implementation purposes, as automatic differentiation is quite efficient, there is still a necessity for a deep analytical understanding of how the  errors are propagated; this is fundamental to the explainability of DNNs.  In this work, we set out to address this void by introducing the Tucker Tensor Layer (TTL), which replaces the dense weight-matrix of a fully connected layer within DNNs with TKD factors. The TKD is chosen by virtue of its ability to provide low-rank representations of tensors while at the same time preserving the latent information within their structure \cite{DeLathauwer2000_2}. Within our approach, we treat the weight-matrix as the unfolding (flattening) of a higher order weight-tensor, and proceed to fully derive back-propagation in terms of the underlying tensor factors, that is, directly in the employed tensor format.

This work therefore aims to introduce a novel analytical framework for the optimization of NNs through compression of their fully-connected layers, together with increasing the interpretability of DNN models by exploiting the desirable properties of TKD. For rigour, we address the problem starting from the very core aspects, basing our analysis on the fundamentals of vector and matrix derivatives, and extending these to tensors, while providing rigorous mathematical arguments for their manipulation. In this way, we demonstrate that this novel and theoretically well-founded derivation is a prerequisite for obtaining physical insights into the training process. For example, we show that the ability of the proposed approach to keep track of the evolution of the gradients of each TKD factor during training makes it possible to reveal valuable information regarding the significance of input features within the DNN model. This, in turn, promises to be of high practical importance as: (i) new physically meaningful insights are provided into how NNs perform classification, thus mitigating their inherent well-known ``black-box" nature, and (ii) inherent means to perform computationally cheap data augmentation follow naturally. The proposed approach is verified through application case studies on synthetic data and the benchmark MNIST \cite{Lecun1998}, Fashion-MNIST \cite{Xiao2017}, and CIFAR-10 \cite{Krizhevsky2009} datasets. In addition, owing to the compressive properties of our proposed framework, the TTL achieves a 66.63 fold compression for NNs trained on MNIST and Fashion-MNIST. It also allows to simplify the VGG-16 CNN \cite{Simonyan2014} applied to CIFAR-10, resulting in a 10\% training time speed up, at a little accuracy trade-off thanks to data augmentation based on gradient information. Comparable performance to the uncompressed NNs is achieved in all cases. 

The contributions of this work can be summarized as follows:
\begin{enumerate}
	\item The proposed Tucker Tensor Layer (TTL) is introduced and a full analytic derivation of back-propagation directly in terms of the TKD factors is provided;
	\item We demonstrate that the TTL can achieve significant compression of DNN parameters while maintaining comparable performance to uncompressed NNs;
	\item By leveraging on the developed theoretical framework, novel insights into the training process are provided via gradient inspection, hence offering a new interpretability to results;
	\item The information from point (3) is employed to perform cheap data augmentation, ultimately leading to the simplification and training speed-up of more complex NNs.
\end{enumerate}

\section{Notation}

\begin{table}[H]
	\centering
	\caption{Main matrix and tensor nomenclature.}
	\label{table:nomenclature}

	\begin{tabular}{ll}
		\hline
		
		\vspace{-2mm}	& \\
		
		$\tn{X} \in \mathbb{R}^{I_1 \times I_2 \times \cdots \times I_N}$ & \begin{tabular}[c]{@{}l@{}} Tensor of order $N$ of\\ size $I_1 \times I_2 \times \cdots \times I_N$\end{tabular} \vspace{2mm} \\

		$x_{i_1i_2\cdots i_N}=\tn{X}(i_1,i_2,\cdots,i_N)$ 	& $(i_1,i_2,\cdots,i_N)$ entry of $\tn{X}$ \vspace{2mm} \\
		$x$, $\mathbf{x}$, $\mathbf{X} $	& Scalar, vector, matrix  \vspace{2mm}\\

		$\mathcal{X}_{(n)} \in \mathbb{R}^{I_n \times I_1 I_2 \dots I_{n-1}I_{n+1} \dots I_N   } $ & \begin{tabular}[c]{@{}l@{}}Mode-$n$ unfolding of  \\  tensor $\tn{X}$\end{tabular} \vspace{2mm} \\		
		
		$\mathbf{A}^{(n)}$ & \begin{tabular}[c]{@{}l@{}}Factor matrices, \\  in tensor decompositions\end{tabular} \vspace{2mm} \\
		
		$(\cdot)^T$, $(\cdot)^{-1}$ & \begin{tabular}[c]{@{}l@{}} Transpose and inverse operators \\  for matrices\end{tabular} \vspace{2mm} \\

		$\circ$, $\otimes, \odot$	& \begin{tabular}[c]{@{}l@{}} Outer, Kronecker,  \\  and Hadamard products\end{tabular} \vspace{2mm}  \\

		$\text{vec}(\tn{X}) = \mathbf{x} \in \mathbb{R}^{I_1 I_2 \cdots I_N}$ & \begin{tabular}[c]{@{}l@{}} Vectorization of \\   tensor $\tn{X}$ \end{tabular} \vspace{2mm} \\

		$||\cdot||_F$	& Frobenius norm  \vspace{2mm}  \\

		$\mathbf{I}_M$	&  \begin{tabular}[c]{@{}l@{}}Identity matrix of\\ size $M\times M$\end{tabular} \vspace{2mm}  \\

		$\mathbf{1}_M$ & \begin{tabular}[c]{@{}l@{}}Vector of ones of\\ size $M$\end{tabular} \vspace{2mm}  \\
		\hline
	\end{tabular}
\end{table}

TABLE \ref{table:nomenclature} summarizes the main tensor nomenclature used throughout this work. In particular, the mode-$n$ unfolding of a tensor $\tn{X} \in \mathbb{R}^{I_1 \times I_2 \times \dots \times I_N}$ represents a rearrangement of the indices of $\tn{X}$, resulting in a matrix $\mathcal{X}_{(n)}  \in \mathbb{R}^{I_n \times I_1 I_2 \dots I_{n-1}I_{n+1} \dots I_N  } $ with entries
\begin{equation}
x_{i_1i_2\cdots i_N} = (\mathcal{X}_{(n)})_{i_n,\overline{i_1\dots i_{n-1} i_{n+1} \dots i_N   }}
\end{equation}
as per the Little-Endian convention \cite{Dolgov2014}. An outer product, denoted by $\circ$ \cite{Mandic2015},  of tensors $\tn{A} \in \mathbb{R}^{I_1 \times I_2 \times \cdots \times I_N}$ and $\tn{B} \in \mathbb{R}^{J_1 \times J_2 \times \cdots \times J_M}$ yields a tensor $\tn{C} \in \mathbb{R}^{I_1 \times I_2 \times \cdots \times I_N \times J_1 \times J_2 \times \cdots \times J_M}$ with entries
\begin{equation}
c_{i_1 i_2 \dots i_N j_1 j_2 \dots j_M} = a_{i_1 i_2 \dots i_N} b_{j_1 j_2 \dots j_M}
\end{equation}
whereas the $(m, n)$-contraction, denoted by $\times^m_n$, between tensors $\tn{X} \in \mathbb{R}^{I_1 \times \cdots \times I_n \times \cdots \times I_N}$ and $\tn{Y} \in \mathbb{R}^{J_1 \times \cdots \times J_m \times \cdots \times J_M    }$ yields a tensor $\tn{Z} \in \mathbb{R}^{ I_1 \times \dots \times I_{n-1} \times I_{n+1} \times \dots \times I_N \times J_1 \times \dots \times J_{m-1} \times J_{m+1} \times \dots \times J_M      }$, with entries
\begin{equation}\label{eq:cont}
\begin{aligned}
&z_{i_1,\dots,i_{n-1}, i_{n+1}, \dots, i_N, j_1, \dots, j_{m-1}, j_{m+1}, \dots, j_M   } =\\
& = \sum_{i_n}^{I_n} x_{i_1, \dots, i_{n-1}, i_n, i_{n+1}, \dots, i_N y_{j_1, \dots, j_{m-1}, i_n, j_{m+1}, \dots, j_M}   }
\end{aligned}
\end{equation}
In particular, we employ the operator $\times^2_n$, which, by convention, is equivalently expressed as $\times_n$.

\section{Theoretical Background}

\subsection{Tucker Decomposition}

The Tucker decomposition (TKD) was introduced by Ledyard Tucker in the 1960s the use in psychometrics and chemometrics \cite{Tucker1963}. It is analogous to a higher form of Principal Components Analysis (PCA) \cite{Wold1987,Tucker1963}, since it decomposes an $N$-th order tensor $\tn{X}$ into $N$ factor matrices, which are projected onto a core tensor which describes the relationship among all matrix entries. For example, for a $3$-rd order tensor, the TKD takes the form of
\begin{equation}
\begin{aligned}
\tn{X} &= \sum_{r_1}^{R_1}\sum_{r_2}^{R_2}\sum_{r_3}^{R_2} g_{r_1 r_2 r_3} \mathbf{u}_{r_1} \circ \mathbf{u}_{r_2} \circ \mathbf{u}_{r_3}\\
& = \tn{G} \times_1 \mathbf{U}^{(1)} \times_2 \mathbf{U}^{(2)} \times_3 \mathbf{U}^{(3)}
\end{aligned}
\end{equation}
where $\tn{G} \in \mathbb{R}^{R_1 \times R_2 \times R_3}$ is the core tensor with multilinear rank $\{ R_1, R_2, R_3  \}$, and $\mathbf{U}^{(1)} \in \mathbb{R}^{I_1 \times R_1}$, $\mathbf{U}^{(2)} \in \mathbb{R}^{I_2 \times R_2}$, $\mathbf{U}^{(3)} \in \mathbb{R}^{I_3 \times R_3}$. An illustration of the TKD for $3$-rd order tensors is portrayed in Fig. \ref{fig:tkd3}.

\begin{figure}[b]
	\centering
	\includegraphics[width=0.9\linewidth]{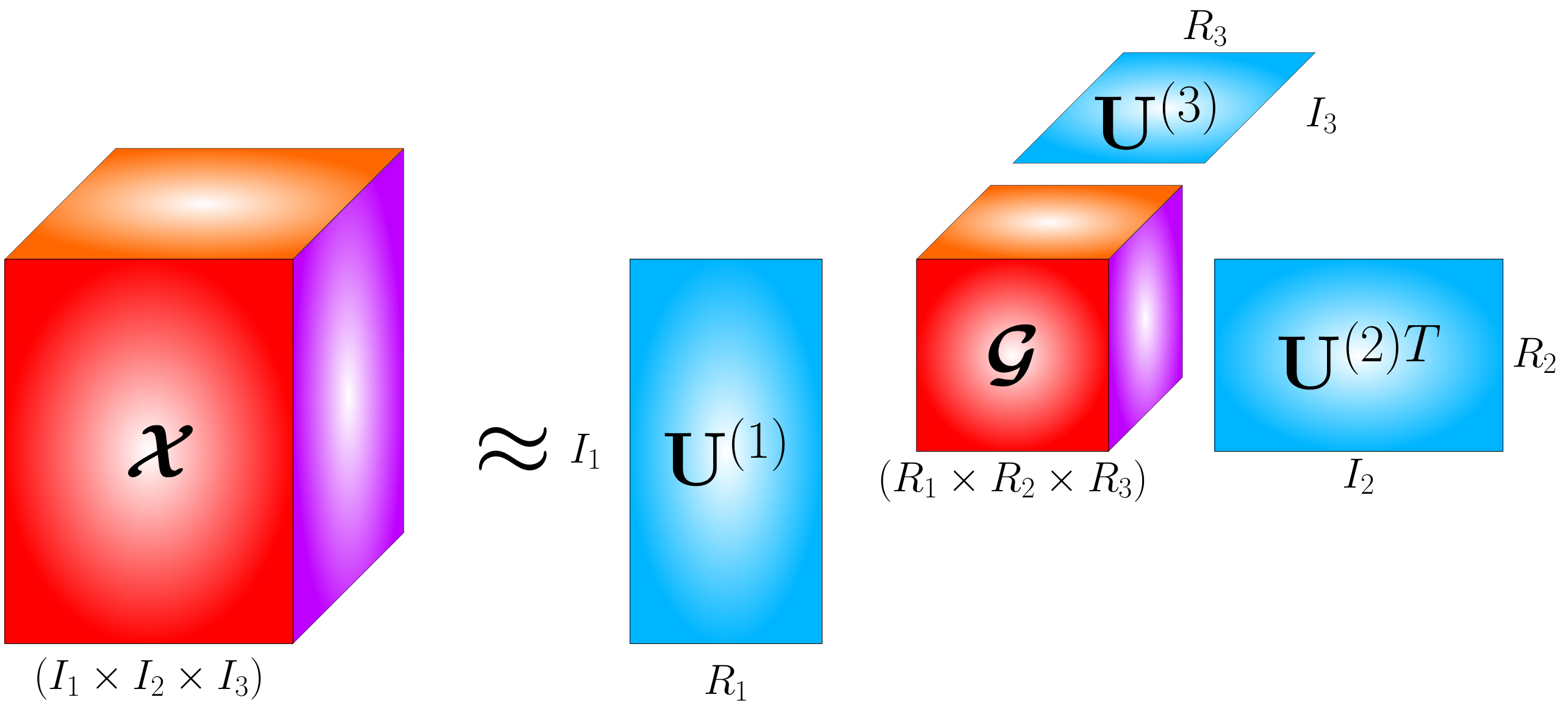}
	\caption{Tucker Decomposition (TKD) for a $3$-rd order tensor $\tn{X} \in \mathbb{R}^{I_1\times I_2 \times I_3}$. The original raw data tensor, $\tn{X}$, is decomposed into a core tensor $\tn{G} \in \mathbb{R}^{R_1 \times R_2 \times R_3}$  and factor matrices $\mathbf{U}^{(1)} \in \mathbb{R}^{I_1 \times R_1}$,  $\mathbf{U}^{(2)} \in \mathbb{R}^{I_2 \times R_2}$,  $\mathbf{U}^{(3)} \in \mathbb{R}^{I_3 \times R_3}$ }
	\label{fig:tkd3}
\end{figure}

Similarly, a TKD for an $N$-th order tensor $\tn{X} \in \mathbb{R}^{I_1 \times I_2 \times \cdots \times I_N}$  takes the form 
\begin{equation}
\begin{aligned}
\tn{X} = & \tn{G} \times_1 \mathbf{U}^{(1)} \times_2 \mathbf{U}^{(2)} \times_3 \cdots \times_N \mathbf{U}^{(N)}
\end{aligned}
\end{equation}
where $\tn{G} \in \mathbb{R}^{R_1 \times R_2 \times \cdots \times R_{N+1}}$ and $\mathbf{U}^{(n)} \in \mathbb{R}^{I_n \times R_n}$. 
Throughout this work, we make significant use of the mode-$n$ unfolding of a tensor within the TKD format, given by  
\begin{equation}\label{eq:nthunfold}
\begin{aligned}
\mathcal{X}_{(n)}= \mathbf{U}^{(n)}\mathcal{G}_{(n)}\big(\mathbf{U}^{(N)}  \otimes& \cdots \otimes \mathbf{U}^{(n-1)} \otimes \\ 
&\otimes\mathbf{U}^{(n+1)} \otimes \cdots  \otimes \mathbf{U}^{(1)}   \big)^T
\end{aligned}
\end{equation}

\subsection{Properties of the Kronecker Product}

If $\mathbf{A} \in \mathbb{R}^{M \times N}$ and $\mathbf{B} \in \mathbb{R}^{P \times Q}$, then the operation $\mathbf{A} \otimes \mathbf{B}$ yields a matrix $\mathbf{C} \in \mathbb{R}^{MP \times NQ}$, with elements $c_{P(r-1)+v, Q(s-1)+w} = a_{rs}b_{vw}$. The Kronecker product is a bilinear and associative operator, which is non-commutative and permutation equivalent. 
Most important properties for our analysis are  the transpose property of the Kronecker product, 
\begin{equation}
(\mathbf{A} \otimes \mathbf{B})^T = \mathbf{A}^T \otimes \mathbf{B}^T
\end{equation}
and the following identity
\begin{equation}
\text{vec}(\mathbf{AXB}) = (\mathbf{B}^T \otimes \mathbf{A}) \text{vec}(\mathbf{X})
\end{equation}

\section{Tensor Derivatives}
To map the parameters of DNNs into the proposed tensor framework, it is a prerequisite to extend the notion of vector and matrix derivatives to higher-order tensors. To this end, we shall first reformulate Definition 4 and Definition 6 from \cite{Magnus1985} in a way which suits our analysis, as follows.
\begin{definition}
	A \textit{ball}  in $\mathbb{R}^{I_1 \times I_2 \times \cdots \times I_N}$ of radius $r$ and centre $\tn{C}$ is denoted by
	\begin{equation}
	B(\tn{C}; r) = \{ \tn{X}: \tn{X} \in \mathbb{R}^{I_1 \times I_2 \times \cdots \times I_N}, || \tn{X} - \tn{C} ||_F < r    \}
	\end{equation}
	where $\tn{C} \in \mathbb{R}^{I_1 \times I_2 \times \cdots \times I_N}$ is an interior point of a set $S$ in $\mathbb{R}^{I_1 \times I_2 \times \cdots \times I_N}$.
\end{definition}

\begin{definition}\label{def:setballten}
	Let $S$ be a set in $\mathbb{R}^{I_1 \times I_2 \times \cdots \times I_N}$, and let $F: S \mapsto \mathbb{R}^{J_1 \times J_2 \times \cdots \times J_M}$ be a tensor-valued function operating on $S$. Notice that, in general, $N \neq M$, that is, $F$ does not necessarily map a tensor to a tensor of the same order. Let the tensor $\tn{C} \in \mathbb{R}^{I_1 \times I_2 \times \cdots \times I_N}$ be an interior point of $S$, and let $B(\tn{C}; r) \subset S$ be a ball with centre $\tn{C}$ and radius $r$. Let $\tn{E}$ be an arbitrary tensor in $\mathbb{R}^{I_1 \times I_2 \times \cdots \times I_N}$, with $||\tn{E}||_F<r$, so that $(\tn{C}+\tn{E}) \in B(\tn{C}; r)$. If there exists a matrix $\mathbf{A} \in \mathbb{R}^{J_1J_2\cdots J_M \times I_1I_2 \cdots I_N}$, which depends on $\tn{C}$ but not on $\tn{E}$, so that
	\begin{equation}
	\text{vec}(F(\tn{C} + \tn{E})) = \text{vec}(F(\tn{C})) + \mathbf{A}(\tn{C})\text{vec}(\tn{E}) + \text{vec}\big(R_\tn{C}(\tn{E})\big)
	\end{equation}
	for all $\tn{E} \in \mathbb{R}^{I_1 \times I_2 \times \cdots \times I_N}$ with $||\tn{E}||_F < r$, where the remainder, $R_{\tn{C}} (\tn{E}$), is of a smaller order than $||\tn{E}||_F$ as $\tn{E} \rightarrow 0$, that is
	\begin{equation}
	\lim_{\tn{E} \rightarrow \mathbf{0}} \frac{R_{\tn{C}}(\tn{E})}{||\tn{E}||_F} = \mathbf{0} 
	\end{equation}
	then the function $F$ is said to be \textit{differentiable} at $\tn{C}$. Define
	\begin{equation}\label{eq:id}
	  \mathbf{A}(\tn{C}) \text{vec}(\tn{E}) = \text{vec}\big(   dF(\tn{C}; \tn{E})    \big) \in \mathbb{R}^{J_1 J_2 \cdots J_M}
	\end{equation}
	then the tensor $dF(\tn{C}; \tn{E}) \in \mathbb{R}^{J_1 \times J_2 \times \cdots \times J_M}$ is called the \textit{first differential of $F$ at} $\tn{C}$ \textit{with increment} $\tn{E}$.
\end{definition}

In view of Definition \ref{def:setballten}, the properties of vector calculus can be readily extended to tensors, because, instead of considering the tensor function $F: S \mapsto \mathbb{R}^{J_1 \times J_2 \times \cdots \times J_M}$, we may consider the vector function $f: \text{vec}(S) \mapsto \mathbb{R}^{J_1 J_2 \cdots J_M \times 1}$ defined by
\begin{equation}
f(\text{vec}(\tn{X})) = \text{vec}(F(\tn{X}))
\end{equation}
Then it follows that
\begin{equation}
\text{vec}(dF(\tn{C}; \tn{E})) = df(\text{vec}(\tn{C}); \text{vec}(\tn{E}))
\end{equation}
This justifies the following definition. 
\begin{definition}\label{def:jacten} 
	Consider a tensor $\tn{X} \in \mathbb{R}^{I_1 \times I_2 \times \cdots \times I_N}$ and the differentiable tensor function $F: \tn{X} \mapsto \mathbb{R}^{J_1 \times J_2 \times \cdots \times J_M}$. Then, the matrix in $\mathbb{R}^{J_1J_2 \cdots J_M \times I_1 I_2 \cdots I_N}$, given by
	\begin{equation}
	\frac{\partial F(\tn{X}) }{\partial \tn{X} } = D(F(\tn{X})) = Df(\text{vec}(\tn{X})) = \frac{\partial f ( \text{vec}(\tn{X})) }{\partial \text{vec}(\tn{X})  }
	\end{equation}
	is referred to as the \textit{derivative} of $F$ at $\tn{X}$.
\end{definition}

A special case of Definition \ref{def:jacten} occurs when the input to a tensor function is a matrix, as formalized in the following definition.  
\begin{definition}\label{def:jacten2}
	Consider a matrix, $\mathbf{X} \in \mathbb{R}^{M \times P}$, and the differentiable tensor function, $F: \mathbf{X} \mapsto \mathbb{R}^{I_1 \times I_2 \times \cdots \times I_N}$. Then the matrix in $\mathbb{R}^{I_1I_2 \cdots I_N \times MP}$, given by
	\begin{equation}\label{eq:equiv}
	\frac{\partial F(\mathbf{X}) }{\partial \mathbf{X} } = D(F(\mathbf{X})) = Df(\text{vec}(\mathbf{X})) = \frac{\partial f ( \text{vec}(\mathbf{X})) }{\partial \text{vec}(\mathbf{X})  }
	\end{equation}
	is referred to as the \textit{derivative} or \textit{Jacobian} of $F$ at $\mathbf{X}$.
\end{definition}

\begin{remark}\label{rem:id}
	From Definition \ref{def:setballten} and the ``identification theorems" in \cite{Magnus1985}, it follows that the conditions on the first differential in (\ref{eq:id})  hold if and only if $ \mathbf{A}(\tn{C}) = D(F(\tn{C}))$.
\end{remark}

With the above analysis, we can now establish a connection between tensor differentiability and the permutation of tensor entries.
\begin{theorem}\label{th:perm}
	Consider a matrix $\mathbf{X} \in \mathbb{R}^{M \times P}$ and the differentiable tensor function $F: \mathbf{X} \mapsto \mathbb{R}^{I_1 \times I_2 \times \cdots \times I_N}$. Consider the tensor mapping $\tn{Y} = F(\mathbf{X})$, and denote by $\mathcal{Y}_{(n)}$ the mode-$n$ unfolding of $\tn{Y}$. Then $\forall n \in \{ 1, 2, \cdots, N\}$, $D(\mathcal{Y}_{(n)})$ is a permuted version of $D(\tn{Y}) = D(F(\tn{X}))$, i.e. $D(\tn{Y})$ and $D(\mathcal{Y}_{(n)})$ are matrices containing the same elements, but arranged differently. We can write this as
	\begin{equation}
	\vspace{-2mm}
	D(\mathcal{Y}_{(n)}) = \mathcal{P}(D(\tn{Y}))
	\end{equation}
	where $\mathcal{P}$ is a permutation operator. 
\end{theorem}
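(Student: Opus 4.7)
The plan is to reduce the theorem to a simple row-permutation identity for the Jacobian, using the observation that unfolding is just a rearrangement of entries, and then invoking the identification step from Definition \ref{def:setballten} and Remark \ref{rem:id}.

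First, I would make explicit the permutation structure of the mode-$n$ unfolding. By the Little-Endian indexing convention recalled in Section~II, the entries of $\mathcal{Y}_{(n)}$ are precisely the entries of $\tn{Y}$, reshuffled according to a bijection on the multi-indices $(i_1,\dots,i_N)$. Passing to the vectorizations, this bijection induces a permutation matrix $\mathbf{P}_n \in \mathbb{R}^{I_1 I_2 \cdots I_N \times I_1 I_2 \cdots I_N}$ such that
\begin{equation}
\text{vec}(\mathcal{Y}_{(n)}) = \mathbf{P}_n \, \text{vec}(\tn{Y}).
\end{equation}
The existence of $\mathbf{P}_n$ is a purely combinatorial fact: the same multiset of scalars is listed in a different order.

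Next, I would apply Definition \ref{def:jacten2} twice, viewing both $\tn{Y} = F(\mathbf{X})$ and $\mathcal{Y}_{(n)}$ as tensor (resp.\ matrix) valued functions of $\mathbf{X}$. By definition
\begin{equation}
D(\tn{Y}) = \frac{\partial\, \text{vec}(\tn{Y})}{\partial\, \text{vec}(\mathbf{X})}, \qquad D(\mathcal{Y}_{(n)}) = \frac{\partial\, \text{vec}(\mathcal{Y}_{(n)})}{\partial\, \text{vec}(\mathbf{X})}.
\end{equation}
To relate these, I would work at the level of the first differential of Definition \ref{def:setballten}. For any increment $\tn{E}$ (equivalently, $\mathbf{E}$ in matrix form with $\text{vec}(\mathbf{E}) = \text{vec}(\mathbf{E})$), the first differential of $F$ satisfies $\text{vec}(dF(\mathbf{X};\mathbf{E})) = D(F(\mathbf{X}))\,\text{vec}(\mathbf{E})$. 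Applying $\mathbf{P}_n$ to both sides, and using that the mode-$n$ unfolding of $dF(\mathbf{X};\mathbf{E})$ is obtained by the same re-indexing that defines $\mathbf{P}_n$, I obtain
\begin{equation}
\text{vec}\bigl(d\mathcal{Y}_{(n)}(\mathbf{X};\mathbf{E})\bigr) = \mathbf{P}_n \, D(F(\mathbf{X})) \, \text{vec}(\mathbf{E}).
\end{equation}
The map $\mathbf{E}\mapsto \mathbf{P}_n D(F(\mathbf{X}))\,\text{vec}(\mathbf{E})$ is linear in $\text{vec}(\mathbf{E})$ and independent of $\mathbf{E}$, so by Remark \ref{rem:id} (the identification theorem) it must equal $D(\mathcal{Y}_{(n)})\,\text{vec}(\mathbf{E})$. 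Hence
\begin{equation}
D(\mathcal{Y}_{(n)}) = \mathbf{P}_n \, D(\tn{Y}),
\end{equation}
which is exactly a row permutation of $D(\tn{Y})$, and I would define the permutation operator $\mathcal{P}(\cdot) := \mathbf{P}_n(\cdot)$ to close the statement.

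The only genuinely delicate step is bookkeeping: verifying that the \emph{same} permutation matrix $\mathbf{P}_n$ that converts $\text{vec}(\tn{Y})$ into $\text{vec}(\mathcal{Y}_{(n)})$ also converts $\text{vec}(dF(\mathbf{X};\mathbf{E}))$ into $\text{vec}(d\mathcal{Y}_{(n)}(\mathbf{X};\mathbf{E}))$. This is not a new computation; it follows because the unfolding operator is a fixed linear reindexing that commutes with differentiation in $\mathbf{X}$. I would state this as a short lemma (unfolding commutes with $d$), after which the row-permutation identity above, and hence the theorem, is immediate.
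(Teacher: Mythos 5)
Your proof is correct and rests on the same underlying idea as the paper's: the mode-$n$ unfolding merely permutes the entries of $\text{vec}(\tn{Y})$, and since both derivatives are defined with respect to vectorizations (Definition \ref{def:jacten2}), the Jacobian inherits that permutation row-wise. The paper's own proof is in fact terser — it only verifies that $D(\tn{Y})$ and $D(\mathcal{Y}_{(n)})$ have equal dimensionality and then asserts the permutation from the fact that $\text{vec}(\tn{Y})$ and $\text{vec}(\mathcal{Y}_{(n)})$ contain the same elements — so your explicit identity $D(\mathcal{Y}_{(n)}) = \mathbf{P}_n\, D(\tn{Y})$, justified through the first differential and the identification step of Remark \ref{rem:id}, is a sharper and more carefully argued version of the same route, and it matches the form in which the permutation matrix $\mathbf{P}_n$ is later used in Equation (\ref{eq:Pn}).
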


\begin{proof}
	Let $\mathbf{X} \in \mathbb{R}^{M\times P}$ and $\tn{Y} = F(\mathbf{X}) \in \mathbb{R}^{I_1 \times I_2 \times \cdots \times I_N}$.  From Definition \ref{def:jacten2} and (\ref{eq:equiv}) we have  
	\begin{equation}
	D(\tn{Y}) = \frac{\partial \tn{Y}}{\partial \mathbf{X}} = \frac{\partial \text{vec}(\tn{Y}) } {\partial \text{vec}(\mathbf{X}) } \in \mathbb{R}^{I_1I_2 \cdots I_N \times MP}
	\end{equation}
	At the same time, this results in  $\mathcal{Y}_{(n)} \in \mathbb{R}^{I_n \times I_1 I_2 \cdots I_{n-1}I_{n+1} \cdots I_N}$. Next,  from Definition \ref{def:jacten2} and (\ref{eq:equiv}) we obtain
	\begin{equation}
	\begin{aligned}
	D(\mathcal{Y}_{(n)}) = \frac{\partial   \mathcal{Y}_{(n)}  }{\partial  \mathbf{X}    } =& \frac{\partial \text{vec}(\mathcal{Y}_{(n)})  }{\partial \text{vec} ( \mathbf{X} )} \\ &\in  \mathbb{R}^{I_n I_1 \cdots I_{n-1} I_{n+1} \cdots I_N \times MP}
	\end{aligned}
	\end{equation}
	Hence, $D(\tn{Y})$ and $D(\mathcal{Y}_{(n)})$ have the same dimensionality. Because $\text{vec}(\tn{Y})$ and $\text{vec}(\mathcal{Y}_{(n)})$ clearly have the same elements but arranged differently, $D(\mathcal{Y}_{(n)})$ is a permuted version of $D(\tn{Y})$.
\end{proof}

\begin{corollary}\label{cor:nm}
	Consider a matrix $\mathbf{X} \in \mathbb{R}^{M \times P}$ and the differentiable tensor function $F: \mathbf{X} \mapsto \mathbb{R}^{I_1 \times I_2 \times \cdots \times I_N}$. Let $\tn{Y} = F(\tn{X})$, and denote by $\mathcal{Y}_{(n)}$ and $\mathcal{Y}_{(m)}$ respectively the mode-$n$ and mode-$m$ unfoldings of $\tn{Y}$. Then $D(\mathcal{Y}_{(n)})$ and  $D(\mathcal{Y}_{(m)})$ are of the same dimensionality, but with permuted elements. This can be formalized as
	\begin{equation}
	D(\mathcal{Y}_{(n)}) = \mathcal{P} (D ( \mathcal{Y}_{(m)}  )   )
	\end{equation}
	where $\mathcal{P}$ is a permutation operator. 
\end{corollary}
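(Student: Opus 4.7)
The plan is to deduce the corollary directly from Theorem \ref{th:perm} by applying it twice, since that theorem already tells us how each individual mode-$n$ unfolding derivative relates to $D(\tn{Y})$. The result will then follow from the basic fact that permutations form a group, so the composition of one permutation with the inverse of another is again a permutation.

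First, I would apply Theorem \ref{th:perm} with mode index $n$ to obtain a permutation operator $\mathcal{P}_n$ such that
\begin{equation}
D(\mathcal{Y}_{(n)}) = \mathcal{P}_n\bigl(D(\tn{Y})\bigr),
\end{equation}
and then apply it again with mode index $m$ to obtain a (generally different) permutation operator $\mathcal{P}_m$ with
\begin{equation}
D(\mathcal{Y}_{(m)}) = \mathcal{P}_m\bigl(D(\tn{Y})\bigr).
\end{equation}
A quick dimensionality check using Definition \ref{def:jacten2} confirms that both $D(\mathcal{Y}_{(n)})$ and $D(\mathcal{Y}_{(m)})$ live in $\mathbb{R}^{I_1 I_2 \cdots I_N \times MP}$, so they have the same shape, and by construction they contain exactly the same entries as $D(\tn{Y})$, merely reordered.

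Next, I would use the fact that a permutation operator on the entries of a matrix is invertible; solving the second identity for $D(\tn{Y})$ gives $D(\tn{Y}) = \mathcal{P}_m^{-1}(D(\mathcal{Y}_{(m)}))$, and substituting this into the first identity yields
\begin{equation}
D(\mathcal{Y}_{(n)}) = \mathcal{P}_n \circ \mathcal{P}_m^{-1}\bigl(D(\mathcal{Y}_{(m)})\bigr) = \mathcal{P}\bigl(D(\mathcal{Y}_{(m)})\bigr),
\end{equation}
where $\mathcal{P} := \mathcal{P}_n \circ \mathcal{P}_m^{-1}$ is again a permutation operator. This is exactly the statement of the corollary.

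Honestly, there is no real obstacle here: the heavy lifting is done in Theorem \ref{th:perm}, and the corollary is essentially a transitivity observation about the equivalence relation ``same elements up to a permutation''. The only mild subtlety worth spelling out is the invertibility of $\mathcal{P}_m$, which is immediate because permutation operators on a finite index set are bijections. I would keep the write-up short and emphasise that the point of the corollary is conceptual, namely that all mode-$n$ unfoldings of $F(\mathbf{X})$ carry the same differential information, merely indexed in different ways.
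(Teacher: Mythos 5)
Your argument is correct and is exactly the route the paper takes: the paper's proof of Corollary \ref{cor:nm} is the one-line statement that it follows immediately from Theorem \ref{th:perm}, and your write-up simply makes explicit the composition $\mathcal{P}_n \circ \mathcal{P}_m^{-1}$ that the paper leaves implicit. No gaps.
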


\begin{proof}
	Follows immediately from Theorem \ref{th:perm}.
\end{proof}

\section{Tucker Tensor Layer (TTL)}\label{sec:tkdnn}
For simplicity, we shall initially omit nonlinearities in the NN model, however, for rigour, they are considered afterwards.
\subsection{Representation}
In general, a fully connected layer within a NN can be expressed as \vspace{-2mm}
\begin{equation}\label{eq:ten2}
\mathbf{y} = \mathbf{W}\mathbf{x} + \mathbf{b}
\vspace{-1mm}
\end{equation}
where $y \in \mathbb{R}^{M}$ is the output, $\mathbf{x} \in \mathbb{R}^N$ the input, and $\mathbf{W} \in \mathbb{R}^{M \times N}$ is the connecting matrix of weights. Now, consider an input tensor of order $N$, $\tn{X} \in \mathbb{R}^{I_1 \times I_2 \times \cdots \times I_N}$, and a weight tensor $\tn{W} \in \mathbb{R}^{I_1 \times I_2 \times \cdots \times I_{N+1}}$, of order $N+1$. Our tensor-valued derivation of back-propagation will be based on the investigation of the function $F: \tn{X} \mapsto \mathbb{R}^{I_{N+1}} $, where $I_{N+1}$ is the number of classes, defined by
\begin{equation}\label{eq:ten}
F(\tn{X}) = \mathcal{W}_{(N+1)}\text{vec}(\tn{X}) + \mathbf{b}
\end{equation}
where $\mathbf{b} \in \mathbb{R}^{I_{N+1}}$ is a bias vector, $\mathcal{W}_{(N+1)} \in \mathbb{R}^{I_{N+1} \times I_1 I_2 \cdots I_N}$ is the $(N+1)$-mode unfolding of tensor $\tn{W}$, and $\text{vec}(\tn{X}) \in \mathbb{R}^{I_1 I_2 \cdots I_N}$ is the vectorization of tensor $\tn{X}$. In (\ref{eq:ten}), we have set $\mathbf{y} = F(\tn{X})$, $\mathbf{W} = \mathcal{W}_{(N+1)}$, and $\mathbf{x} = \text{vec}(\tn{X})$. As shown in the following, the representation in (\ref{eq:ten}) has the advantage of allowing for a compressed version of the NN layer, via tensor decompositions.

A tensor $\tn{W}$ can be represented in the TKD format as
\begin{equation}\label{eq:tuck}
\begin{aligned}
\tn{W} &= \tn{G} \times_1 \mathbf{U}^{(1)} \times_2 \mathbf{U}^{(2)} \times_3 \cdots \times_{N+1} \mathbf{U}^{(N+1)}
\end{aligned}
\end{equation}
where $\tn{G} \in \mathbb{R}^{R_1 \times R_2 \times \cdots \times R_{N+1}}$ is the core tensor, and $\mathbf{U}^{(n)} \in \mathbb{R}^{I_n \times R_n}$ the corresponding factor matrices. This implies that the $(N+1)$-mode matrix unfolding of $\tn{W}$, denoted by $\mathcal{W}_{(N+1)}$, can be expressed as
\begin{equation}\label{eq:w_unfolded}
\begin{aligned}
\mathcal{W}_{(N+1)} & =  \mathbf{U}^{(N+1)}\mathcal{G}_{(N+1)} \big (   \mathbf{U}^{(N)} \otimes \cdots \otimes \mathbf{U}^{(1)}    \big)^T \\
& = \mathbf{U}^{(N+1)}\mathcal{G}_{(N+1)} \Motimes_{i=N}^{1} \mathbf{U}^{(i)T}
\end{aligned}
\end{equation}
Upon substituting (\ref{eq:w_unfolded}) into (\ref{eq:ten}) we have
\begin{equation}\label{eq:F}
F(\tn{X}) = \bigg [ \mathbf{U}^{(N+1)}\mathcal{G}_{(N+1)} \Motimes_{i=N}^{1} \mathbf{U}^{(i)T} \bigg] \text{vec}(\tn{X}) + \mathbf{b}
\end{equation}
\begin{remark}
	The compression of the DNN parameters through the TTL approach is achieved by selecting the size of the modes of the core tensor $\tn{G}$ within the TKD to be smaller than those of the weight tensor $\tn{W}$, that is $R_n < I_n$, for $n=1, 2, \dots, N+1$.
\end{remark}

\subsection{Learning via Tensor-Valued Back-Propagation}\label{sec:back}

Neural networks are generally trained with stochastic gradient descent algorithms, where at each step the gradient is computed using the back-propagation procedure \cite{Rumelhart1986}. Back-propagation starts by computing the gradient of a loss function $L$ w.r.t. the NN output, then, given the gradient $\frac{\partial L}{\partial F(\tn{X})}$, proceeds sequentially through the layers of the NN, but in a reversed order. When applied to the fully connected layer as expressed in (\ref{eq:ten}) back-propagation computes \cite{Novikov2015}
\begin{equation}\label{eq:grad}
\begin{aligned}
&\frac{\partial L}{\partial \text{vec}{(\tn{X})}  } =  \mathcal{W}_{(N+1)}^T \frac{\partial L}{\partial F(\tn{X})  }\\[3pt]
&\frac{\partial L}{\partial \mathbf{b}} =\frac{\partial L}{\partial F(\tn{X})}\\[3pt]
&\frac{\partial L}{ \partial \mathcal{W}_{(N+1)}} =    \frac{\partial L}{\partial F(\tn{X})} \text{vec}(\tn{X})^T
\end{aligned}
\end{equation} 
Since $L$ is a scalar, the gradients in (\ref{eq:grad}) are of dimensionality $\frac{\partial L}{\partial \text{vec}{(\tn{X})}  } \in \mathbb{R}^{I_1 I_2 \cdots I_N}$, $\frac{\partial L}{\partial \mathbf{b}} \in \mathbb{R}^{I_{N+1}}$, and $\frac{\partial L}{ \partial \mathcal{W}_{(N+1)}} \in \mathbb{R}^{I_{N+1} \times I_1 I_2 \cdots I_N}$. This allows, at each iteration of the algorithm, for an update of the weight matrix as $\mathcal{W}_{(N+1)_{t+1}}  = \mathcal{W}_{(N+1)_t} + \mu \frac{\partial L}{ \partial \mathcal{W}_{(N+1)_t}}$ where $\mu$ is a step size. 

To derive the tensor-valued back-propagation based on the Tucker model in (\ref{eq:tuck}), we first note that the gradient $\frac{\partial L  }{\partial \mathbf{U}^{(n)} }$ is a function of $\frac{\partial  L }   {    \partial  F(\tn{X})      } $,      $\frac{\partial F(\tn{X})  }{\partial  \mathcal{W}_{(N+1)}  }$, $\frac{\partial \mathcal{W}_{(N+1)}  }{\partial  \mathbf{U}^{(n)} }$, to give the derivative
\begin{equation}\label{eq:part}
\frac{\partial L  }{\partial \mathbf{U}^{(n)} }   =   \Omega \bigg( \frac{\partial  L }{\partial  F(\tn{X}) }, \frac{\partial F(\tn{X})  }{\partial  \mathcal{W}_{(N+1)}  },    \frac{\partial \mathcal{W}_{(N+1)}  }{\partial  \mathbf{U}^{(n)} }    \bigg)
\end{equation}
Then, we can define the following shorthand notation for the dimensionalities involved: $\underline{R}_{-n}  = R_1 R_2 \cdots R_{n-1}R_{n+1} \cdots R_{N+1}$, $\underline{I}_{-n}  = I_1 I_2 \cdots I_{n-1}I_{n+1} \cdots I_{N+1}$, $\underline{I}_{N}  = I_1 I_2 \cdots I_{N}$ and $\underline{I}_{N+1} = I_1 I_2 \cdots I_{N+1}$.
From the rules of the Kronecker product, we now obtain
\begin{equation}
\frac{\partial  F(\tn{X}) }{\partial \mathcal{W}_{(N+1)}   } = \text{vec}(\tn{X})^T \otimes \mathbf{I}_{I_{N+1}} \in \mathbb{R}^{I_{N+1} \times \underline{I}_{N+1}}
\end{equation}
A closer inspection of equation  (\ref{eq:part}) shows that. Regarding the term  $\frac{\partial   \mathcal{W}_{(N+1)}  }{\partial  \mathbf{U}^{(n)} } \in \mathbb{R}^{\underline{I}_{N+1}  \times R_n I_n }$, by Corollary \ref{cor:nm}, this matrix can be found as a permutation of  $\frac{\partial \mathcal{W}_{(n)}  }{\partial  \mathbf{U}^{(n)} }$, which is a matrix of the same dimensionality and containing the same elements. In other words, 
\begin{equation}
\frac{\partial \mathcal{W}_{(N+1)}  }{\partial  \mathbf{U}^{(n)}} = \mathcal{P} \bigg(  \frac{\partial \mathcal{W}_{(n)}  }{\partial  \mathbf{U}^{(n)} }  \bigg) = \mathbf{P}_n \frac{\partial \mathcal{W}_{(n)}  }{\partial  \mathbf{U}^{(n)} }
\end{equation}
where $\mathbf{P}_n \in \mathbb{R}^{\underline{I}_{N+1} \times \underline{I}_{N+1}  }$ is a permutation matrix that satisfies
\begin{equation}\label{eq:Pn}
\text{vec}(\mathcal{W}_{(n)}) = \mathbf{P}_n \text{vec}(\mathcal{W}_{(N+1)})
\end{equation}
The task hence boils down to computing
\begin{equation}\label{eq:part2}
\frac{\partial   L  }{\partial   \mathbf{U}^{(n)}  } = \Omega \bigg(  \frac{\partial  L }{\partial  F(\tn{X}) }, \frac{\partial F(\tn{X})  }{\partial  \mathcal{W}_{(N+1)}  },  \mathbf{P}_n \frac{\partial  \mathcal{W}_{(n)}    }   {\partial \mathbf{U}_{(n)}   }    \bigg)
\end{equation}
The mode-$n$ unfolding of the weight tensor $\tn{W} \in \mathbb{R}^{I_1 \times I_2 \times \cdots \times I_{N+1} }$ is given by 
\begin{equation}
\begin{aligned}
\mathcal{W}_{(n)}& =\mathbf{U}^{(n)}\mathcal{G}_{(n)} \big(   \mathbf{U}^{(N+1)} \otimes  \cdots \otimes&&\\ &&\hspace*{-4cm}\otimes\mathbf{U}^{(n+1)} \otimes \mathbf{U}^{(n-1)} \otimes \cdots \otimes \mathbf{U}^{(1)}  \big)^T\\
&= \mathbf{U}^{(n)}\mathcal{G}_{(n)} \Motimes_{ \substack{i=N+1 \\ i \neq n}}^{1} \mathbf{U}^{(i)T} \in \mathbb{R}^{I_n \times \underline{I}_{-n}}
\end{aligned}
\end{equation}
while its partial derivative w.r.t. $\mathbf{U}^{(n)}$ now becomes
\begin{equation}\label{eq:dWndUn}
\begin{aligned}
\frac{\partial \mathcal{W}_{(n)}  }  {\partial \mathbf{U}^{(n) } } & = \big(   \mathbf{U}^{(N+1)} \otimes \cdots \otimes&&\\ 
&&\hspace*{-6cm}\otimes\mathbf{U}^{(n+1)} \otimes \mathbf{U}^{(n-1)} \otimes \cdots \otimes \mathbf{U}^{(1)}      \big) \mathcal{G}_{(n)}^T \otimes \mathbf{I}_{I_n}\\
& = \Motimes_{ \substack{i=N+1 \\ i \neq n}}^{1 } \mathbf{U}^{(i)}\mathcal{G}_{(n)}^T \otimes \mathbf{I}_{I_n} \in \mathbb{R}^{\underline{I}_{N+1}\times R_n I_n  }
\end{aligned}
\end{equation}
From (\ref{eq:part}) we require $\frac{\partial   L  }{\partial  \mathbf{U}^{(n)}  } \in \mathbb{R}^{I_n \times R_n}$, which is the matricization of $\frac{\partial   L  }{\partial \text{vec} (\mathbf{U}^{(n)})  } \in \mathbb{R}^{R_n  I_n}$. Therefore, to find $\frac{\partial   L  }{\partial  \mathbf{U}^{(n)}  } \in \mathbb{R}^{I_n \times R_n}$, it is sufficient to compute $\frac{\partial   L  }{\partial  \text{vec} (\mathbf{U}^{(n)} ) } \in \mathbb{R}^{R_n I_n}$ and reshape the result accordingly. Since $\frac{\partial  L  }{\partial F(\tn{X})	} \in \mathbb{R}^{I_{N+1}}$, $\frac{\partial F  (\tn{X})  }{\partial     \mathcal{W}_{(N+1)}  }  \in \mathbb{R}^{I_{N+1} \times \underline{I}_{N+1}}$, $\frac{\partial  \mathcal{W}{(n)}    }   {\partial \mathbf{U}_{(n)}   } \in \mathbb{R}^{\underline{I}_{N+1} \times R_n I_n  }$ and $\frac{\partial  L  }{\partial  \text{vec}(\mathbf{U}^{(n)})   } \in \mathbb{R}^{R_n I_n}$,  this results in
\begin{equation}\label{eq:dLdU}
\frac{\partial  L  }{\partial  \text{vec}(\mathbf{U}^{(n)})   } = \bigg( \mathbf{P}_n \frac{\partial  \mathcal{W}{(n)}    }   {\partial \mathbf{U}_{(n)}   } \bigg)^T  \frac{\partial  F  (\tn{X})  }{\partial     \mathcal{W}_{(N+1)}  }^T \frac{\partial  L  }{\partial F(\tn{X})}
\end{equation}
Similarly, since $\frac{\partial L  }{\partial  \mathcal{G}_{(N+1)}    } \in \mathbb{R}^{R_{N+1} \times \underline{R}_N}$ can be computed by considering $\frac{\partial L  }{\partial \text{vec} (\mathcal{G}_{(N+1)})    } \in \mathbb{R}^{\underline{R}_{N+1}}$, we arrive at
\begin{equation}\label{eq:dLdG}
\frac{\partial L  }{\partial \text{vec} (\mathcal{G}_{(N+1)})} = \frac{\partial \mathcal{W}_{(N+1)}  }{\partial  \mathcal{G}_{(N+1)  } }^T \frac{\partial   F(\tn{X})   }{\partial \mathcal{W}_{(N+1)}  }^T \frac{\partial  L }{\partial  F(\tn{X}) }
\end{equation}
where, 
\begin{equation}\label{eq:dWdGN1}
\frac{\partial \mathcal{W}_{(N+1)}   }{\partial \mathcal{G}_{(N+1)}   } = (\mathbf{U}^{(N)}  \otimes \cdots \otimes \mathbf{U}^{(1)}    ) \otimes \mathbf{U}^{(N+1)} \in \mathbb{R}^{\underline{I}_{N+1} \times \underline{R}_{N+1}    }   
\end{equation}

\subsection{Verifying the TTL Gradients}\label{sec:gradcheck}

Equations (\ref{eq:dLdU}) and (\ref{eq:dLdG}) can be verified based on Definition \ref{def:setballten}. In particular, defining the cost function as a general function $L: (F(\tn{X}), \mathbf{d}) \mapsto \mathbb{R}$, where $\mathbf{d}$ is a desired output and $F(\tn{X})$ is defined as in (\ref{eq:F}), we can verify the gradient $\frac{\partial L}{\partial \mathbf{U}^{(n)}}$ (i.e. a permuted version of $\frac{\partial L}{\partial  \text{vec}(\mathbf{U}^{(n)})   }$) by treating all inputs to $L$ as constants except for $\mathbf{U}^{(n)}$, and confirming that
\begin{equation}\label{eq:ratU}
\lim_{\mathbf{E} \rightarrow \mathbf{0}} \frac{R_{\mathbf{U}^{(n)}}(\mathbf{E})}{||\mathbf{E}||_F} =\mathbf{ 0} 
\end{equation} 
where, from Remark \ref{rem:id},
\begin{equation}
\begin{aligned}
R_{\mathbf{U}^{(n)}}(\mathbf{E})  &= L(\mathbf{U}^{(n)}  + \mathbf{E} ) - L(\mathbf{U}^{(n)} ) - \mathbf{A}(\mathbf{U}^{(n)}) \text{vec}(\mathbf{E})\\
 = L(\mathbf{U}^{(n)}&  + \mathbf{E} ) - L(\mathbf{U}^{(n)} ) - \bigg( \frac{\partial L}{\partial  \text{vec}(\mathbf{U}^{(n)})   }\bigg)^T \text{vec}(\mathbf{E})
\end{aligned}
\end{equation}

Similarly, to verify the gradient $\frac{\partial     L   }{\partial  \tn{G}     }$ (i.e. a permuted version of $\frac{\partial L    }{\partial  \text{vec}(\mathcal{G}_{(N+1)})   }$  ), we have to treat all inputs to $L$ as constants, except for $\tn{G}$, and confirm that
\begin{equation}\label{eq:ratG}
\lim_{\tn{E} \rightarrow \mathbf{0}} \frac{R_{\tn{G}}(\tn{E})}{||\tn{E}||_F} = \mathbf{0} 
\end{equation}
where
\begin{equation}\label{eq:remG}
\begin{aligned}
R_{\tn{G}}(\tn{E})  &= L(\tn{G}  + \tn{E} ) - L(\tn{G} ) - \mathbf{A}(\tn{G}) \text{vec}(\tn{E})\\
& = L(\tn{G}  + \tn{E} ) - L(\tn{G} ) - \bigg( \frac{\partial L}{\partial  \text{vec}(\tn{G})   }\bigg)^T \text{vec}(\tn{E})
\end{aligned}
\end{equation}

To demonstrate the correctness of this approach, we defined a $3$-rd order data tensor $\tn{X} \in \mathbb{R}^{5 \times 5 \times 5}$, the elements of which were drawn from a normal distribution, such that $\text{vec}(\tn{X}) \sim \mathcal{N}(0,1)$, and a $4$-th order weight tensor $\tn{W} \in \mathbb{R}^{5 \times 5 \times 5 \times 3}$, also with elements drawn from a normal distribution, i.e. $\text{vec}(\tn{W}) \sim \mathcal{N}(0,1)$. From $\tn{W}$, the core tensor $\tn{G}$ and the factor matrices $\mathbf{U}^{(n)}$, $n = 1, \dots, 4$ were obtained through a TKD with multi-linear rank $(5,5,5,3)$. The desired output $\mathbf{d} \in \mathbb{R}^3$ was arbitrarily assigned, and the mean-square error (MSE) was employed as a cost function, i.e. $L = \frac{1}{2}||F(\tn{X}) - \mathbf{d}  ||^2_F$, implying $\frac{\partial L}{\partial F(\tn{X})} = F(\tn{X})-\mathbf{d}$. The gradients were computed according to the procedure in Section \ref{sec:back}, and were verified according to Equations (\ref{eq:ratU})-(\ref{eq:remG}). The results are shown in Fig. \ref{fig:ratios}, and conform with our analysis.

\begin{figure*}[t]
	\centering
	\includegraphics[width=1\linewidth,  trim={0 0cm 0 0cm}, clip]{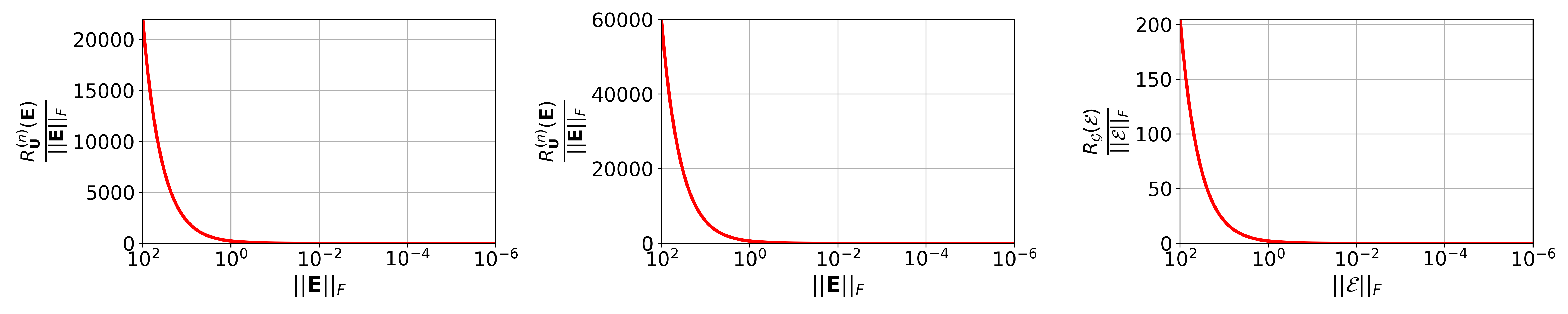}
	\caption{Verification of the gradients by confirming that the remainder tends to 0 faster than the magnitude of the perturbation, as per (\ref{eq:ratU}) and (\ref{eq:ratG}). \textit{Left}: Gradient verification for $\frac{\partial   L   }{\partial    \mathbf{U}^{(2)}    }$. \textit{Centre}: Gradient verification for $\frac{\partial   L   }{\partial    \mathbf{U}^{(4)}    }$. \textit{Right}: Gradient verification for $\frac{\partial   L   }{\partial    \tn{G}    }$. Notice the inverted abscissa and the logarithmic scale.}
	\label{fig:ratios}
\end{figure*}


\section{Algorithm Implementation}

To illustrate the algorithmic implementation of the results obtained in Section \ref{sec:tkdnn}, we next consider a dataset consisting of $N$-th order tensors $\tn{X} \in \mathbb{R}^{I_1 \times I_2 \times \cdots \times I_N}$ and their corresponding labels, that is $\{  \tn{X}_m, y_m   \}_m^{M}$, $m=1, \dots, M$, where $M$ is the size of the dataset. For an efficient representation, the data is organized in a matrix $\ten{X} \in \mathbb{R}^{\underline{I}_N \times M}$, defined as
\begin{equation}
\ten{X} = 
\begin{bmatrix}
\text{vec}(\tn{X}_1), & \text{vec}(\tn{X}_2), & \dots, & \text{vec}(\tn{X}_M)  
\end{bmatrix}
\end{equation}
so that Equation (\ref{eq:ten}) can be re-written as
\begin{equation}
F(\ten{X})   = \mathcal{W}_{(N+1)} \ten{X} + \mathbf{B} \equiv \mathbf{\underline{F}}
\end{equation}
where $\mathbf{B} \in \mathbb{R}^{I_{N+1} \times M}$ is a matrix of biases, and $F(\ten{X}) \in \mathbb{R}^{I_{N+1} \times M}$. Note that $\mathbf{B}^{(l)} \in \mathbb{R}^{I_{N+1}^{(l)} \times M  }$ must apply the same biases to each element in the dataset, and, as such, has the structure of $\mathbf{B}^{(l)} = \mathbf{b}^{(l)}\mathbf{1}^T_M$
\begin{equation}
\mathbf{B}^{(l)} = \mathbf{b}^{(l)}\mathbf{1}^T_M
\end{equation}
where $\mathbf{b}^{(l)} \in \mathbb{R}^{I_{N+1}^{(l)}}$ are the actual biases.

Consider now a neural network composed of layers indexed by $l = 0, 1, 2, \dots, L$, where $l = 0$ is the input layer, $l = L$ is the output layer, and $l = 1, \dots, L-1$ are the hidden layers. For generality, assume that all layers are TTLs. With a slight modification in our notation, for each layer we denote inputs and outputs as summarized in Table \ref{table:layers}.

\begin{table}[b]
	\centering
	\caption{Inputs and outputs of network layers.}
	\label{table:layers}
	\begin{tabular}{ll}
		\hline
		
		\vspace{-2mm}	& \\
		
		$\ten{Z}^{(0)} = \ten{X}$ & \begin{tabular}[c]{@{}l@{}}  Output of layer $l=0$:\\ the input to the \\network\end{tabular} \vspace{1mm} \\

		$\mathcal{W}_{(N+1)}^{(l)}$ 	&\begin{tabular}[c]{@{}l@{}}  Weight matrix for \\ layers $l= 1, \dots, L$\end{tabular}\vspace{1mm} \\
		
		$\mathbf{U}^{(n)}_{(l)}$, $\mathcal{G}_{(N+1)}^{(l)}$ 	&\begin{tabular}[c]{@{}l@{}}  Factor matrices  and \\ core tensor associated \\ to $\mathcal{W}_{(N+1)}^{(l)}$ for layers \\ $l= 1, \dots, L$   \end{tabular}\vspace{1mm} \\
		
		$\ten{F}^{(l)} = \mathcal{W}_{(N+1)}^{(l)} \ten{Z}^{(l-1)} + \mathbf{B}^{(l)}$	& \begin{tabular}[c]{@{}l@{}} Input to layers \\ $l=1, \dots, L$ \end{tabular}  \vspace{1mm}\\

		$\ten{Z}^{(l)} = \sigma(\ten{F}^{(l)}) $ & \begin{tabular}[c]{@{}l@{}}Output of layers \\ $l = 1, \dots, L $,  where\\ $\sigma(\cdot)$ is a point-wise\\ activation function\end{tabular} \vspace{1mm} \\

		\hline
	\end{tabular}
\end{table}

Upon calculating the gradient of the cost function $L$ w.r.t. the input of the last layer of the network $\ten{Z}^{(L)}$, that is $\frac{\partial L  }{\partial \ten{F}^{(L)  }} \in \mathbb{R}^{I_{N+1}^{(L)} \times M}$, the error is then back-propagated through layers $l = L-1, L-2, \dots, 1$.
As a result the gradients of the cost function w.r.t. the factor matrices and the core tensor of the $l$-th Tucker tensor layer are computed as
\begin{equation}\label{eq:UG}
\begin{aligned}
&\frac{\partial  L }{\partial  \text{vec}(\mathbf{U}^{(n)}_{(l)} )   } = \bigg( \mathbf{P}_n \frac{\partial  \mathcal{W}_{(n)}^{(l)}    }   {\partial \mathbf{U}^{(n)}_{(l)}  } \bigg)^{T}  \frac{\partial  \ten{F}^{(l)}    }{\partial     \mathcal{W}_{(N+1)}^{(l)}  }^{T} \text{vec}(\ten{D}^{(l)})\\[5pt]
&\frac{\partial L  }{\partial \text{vec} (\mathcal{G}_{(N+1)}^{(l)})} = \frac{\partial \mathcal{W}_{(N+1)} ^{(l)} }{\partial  \mathcal{G}_{(N+1)  }  ^{(l)}   }^T \frac{\partial   \ten{F}^{(l)}   }{\partial \mathcal{W}_{(N+1)}^{(l)}  }^T\text{vec}(\ten{D}^{(l)})
\end{aligned}
\end{equation}
where
\begin{equation}\label{eq:D}
\hspace{-1.4mm }\ten{D}^{(l)} = 
\begin{cases}
& \hspace{-0.4cm} \frac{\partial  L }{\partial \ten{Z}^{(L)}    }  \frac{\partial  \ten{Z}^{(L)}  }  {\partial   \ten{F}^{(L)} } = \frac{\partial  L } {\partial \ten{F}^{(L)}   } , \hspace{3mm} \text{if $l=L$}\\\\
&\hspace{-0.4cm}{\scriptstyle \mathcal{W}_{(N+1)}^{(l+1)T} \ten{D}^{(l+1)} \odot \sigma '(\ten{F}^{(l)})}, \hspace{1mm}
\text{if $l = L-1, \dots, 1$}
\end{cases}
\end{equation}
and
\begin{equation}\label{eq:dFdWN1}
\frac{\partial  \ten{F}^{(l)} }{\partial   \mathcal{W}_{(N+1)}  ^{(l)} } = \ten{Z}^{(l-1)T} \otimes \mathbf{I}_{I_{N+1}^{(l)}}
\end{equation}
By substituting Equations (\ref{eq:dWdGN1}) and (\ref{eq:dFdWN1}) into Equation (\ref{eq:UG}),  we can now rewrite the gradients for the factor matrices $\mathbf{U}^{(n)}_{(l)}  $ and the core tensors $\tn{G}^{(l)}$ as
\begin{equation}\label{eq:gradients}
\begin{aligned}
&\frac{\partial  L }{\partial  \text{vec}(\mathbf{U}^{(n)}_{(l)} )   } = \bigg( \mathbf{P}_n \frac{\partial  \mathcal{W}_{(n)}^{(l)}    }   {\partial \mathbf{U}^{(n)}_{(l)}  } \bigg)^{T} \text{vec}\Big(\ten{D}^{(l)}  \ten{Z}^{(l-1)T}  \Big )\\
&\frac{\partial L  }{\partial  \mathcal{G}_{(N+1)}^{(l)}} = \mathbf{U}^{(N+1)T}_{(l)}  \ten{D}^{(l)}  \ten{Z}^{(l-1)T} \bigg( \Motimes_{ \substack{i=N }}^{1 } \mathbf{U}^{(i)}_{(l)}\bigg) 
\end{aligned}
\end{equation}
Moreover, if $n=N+1$, by making use of Equation (\ref{eq:dWndUn}),  we directly obtain the result
\begin{equation}\label{eq:dLdUN1}
\frac{\partial    L	}{\partial 		\mathbf{U}^{(N+1)}_{(l)}	} = \ten{D}^{(l)}\ten{Z}^{(l-1)T}\bigg( \Motimes_{ \substack{i=N }}^{1 } \mathbf{U}^{(i)}_{(l)}\bigg)\mathcal{G}_{(N+1)}^{(l)T}
\end{equation}
Similarly, $\frac{\partial  L		}{\partial  \mathbf{b}^{(l)}	} \in \mathbb{R}^{I_{N+1}^{(l)}}$ can be computed first by considering  $\frac{\partial  L		}{\partial  \mathbf{B}^{(l)}	} \in \mathbb{R}^{I_{N+1}^{(l)} \times M}$, 
\begin{equation}\label{eq:dLdB}
\begin{aligned}
\frac{\partial  L		}{\partial  	\text{vec}(\mathbf{B}^{(l)})	} &= \frac{\partial \ten{F}^{(l)}			}{\partial \mathbf{B}^{(l)}	} \text{vec}(\ten{D}^{(l)})\\
& = (\mathbf{I}_M \otimes \mathbf{I}_{I_{N+1}^{(l)}}) \text{vec}(\ten{D}^{(l)})
\end{aligned}
\end{equation}
Hence, from the rules of the Kronecker product,  $\frac{\partial  L  }{\partial   \mathbf{B}^{(l) }    } = \ten{D}^{(l)}$, which implies that, 
\begin{equation}\label{eq:dLdb}
\begin{aligned}
\frac{\partial  L }{\partial  \mathbf{b}^{(l)}  } &= \bigg(   \frac{\partial  \mathbf{B}^{(l)}}{\partial \mathbf{b}^{(l)}  }   \bigg)^T \text{vec}(\ten{D}^{(l)} )\\
& = \bigg(\mathbf{1}_M^T \otimes \mathbf{I}_{I_{N+1}^{(l)}} \bigg )  \text{vec}(\ten{D}^{(l)} )
\end{aligned}
\end{equation}
The forward and back-propagation procedures are summarized in Algorithm \ref{algo:tkdnn}.
\begin{algorithm}[t]
	\caption{Forward and back-propagation for TTLs}
	\label{algo:tkdnn}
	\begin{algorithmic}[1]
		\State\textbf{Input:} Dataset $\ten{X} \in \mathbb{R}^{\underline{I}_N \times M}$
		\State
		\State $\ten{Z}^{(0)} = \ten{X}$, and initialize all $\mathbf{U}^{(n)}_{(l)}$ and $\mathcal{G}_{(N+1)}^{(l)}$
		\State 
		\State \textbf{Forward propagation:}
		\For {$l = 1, \dots, L$}
		\State Compute $\mathcal{W}_{(N+1)}^{(l)}$ via Equation (\ref{eq:w_unfolded})
		\State $\ten{F}^{(l)} = \mathcal{W}_{(N+1)}^{(l)} \ten{Z}^{(l-1)} + \mathbf{B}^{(l)}$
		\State $\ten{Z}^{(l)} = \sigma(\ten{F}^{(l)}) $
		\State Store $\ten{F}^{(l)}$, $\ten{Z}^{(l)}$
		\State Compute and store $\sigma ' (\ten{F}^{(l)})$
		\EndFor
		\State
		\State \textbf{Back-propagation:}
		\For {$l = L, L-1 \dots, 1$}
		\State Compute $\ten{D}^{(l)}$ via Equation (\ref{eq:D})
		\State Compute the gradients via Equations (\ref{eq:gradients})-(\ref{eq:dLdb}) 
		\State $\mathbf{U}^{(n)}_{(l)} \leftarrow \mathbf{U}^{(n)}_{(l)} - \eta \bigg( \frac{\partial  L }{\partial  \mathbf{U}^{(n)}_{(l)}    } \bigg)$
		\State $\mathcal{G}_{(N+1)}^{(l)}  \leftarrow \mathcal{G}_{(N+1)}^{(l)} - \eta \bigg( \frac{\partial  L }{\partial  \mathcal{G}_{(N+1)}^{(l)}    } \bigg)$
		\State $\mathbf{b}^{(l)} \leftarrow \mathbf{b}^{(l)} - \eta \bigg(    \frac{\partial 	L	}{\partial \mathbf{b}^{(l)}  	}       \bigg) $ 
		\EndFor
	\end{algorithmic}
\end{algorithm}

\section{Simulation Results}

A comprehensive experimental validation of the proposed concept includes the discussion of physical interpretability, followed by intuitive simulation results on synthetic data as well as standard benchmark datasets. 
\vspace{-2mm}
\subsection{Physical Interpretability}\label{sec:int}

Based on the general equation for the TTL model in (\ref{eq:F}), it is natural to ask whether the matrices  $\mathbf{U}^{(n)} \in \mathbb{R}^{I_n \times R_n}$ are physically related to the data tensors $\tn{X} \in \mathbb{R}^{I_1 \times I_1 \times \cdots \times I_N}$.  To this end, consider the identity, 
\begin{equation}\label{eq:equivalence}
\begin{aligned}
F(\tn{X}) &= \bigg [ \mathbf{U}^{(N+1)}\mathcal{G}_{(N+1)} \Motimes_{i=N}^{1} \mathbf{U}^{(i)T} \bigg] \text{vec}(\tn{X}) + \mathbf{b} \\[2pt]
&=\mathbf{U}^{(N+1)} \mathcal{G}_{(N+1)} \text{vec}\big(\tn{X} \times_1 \mathbf{U}^{(1)T} \times_2 \\ 
&\hspace{1cm}\times_2\mathbf{U}^{(2)T} \times \cdots \times_N \mathbf{U}^{(N)T} \big) + \mathbf{b}
\end{aligned}
\end{equation}

This implies that the $n$ physical modes, of dimensionality $I_n$, of tensors $\tn{X}$ are directly intertwined with the corresponding factor matrices $\mathbf{U}^{(n)}$. As a direct consequence, certain gradient terms $\frac{\partial L}{\partial \mathbf{U}^{(n)}}$ will carry more significant information than others, depending on which modes $I_n$ of the original data tensors are richer in structure. This is of great practical importance, as one of the most well-known problems with NNs is their black-box nature, while the proposed method offers  viable means to aid the understanding of which data features have a higher influence on their training process.

Figure \ref{fig:equivalence} offers a graphical representation of Equation (\ref{eq:equivalence}). Each node represents a tensor, the order of which is determined by the number of edges it connects to. The edges represent the modes, and their labels the corresponding dimensionality. 
\begin{remark}
	From Figure \ref{fig:equivalence}, it becomes apparent that each factor matrix, $\mathbf{U}^{(n)}$, and hence each gradient, $\frac{\partial  L}{\partial  \mathbf{U}^{(n)}  }$, is associated with the respective mode-$n$, suggesting that the gradients themselves will carry valuable information related to the underlying data structure.
\end{remark}

\begin{remark}
	A more rigorous mathematical argument for the above can be provided via inspection of (\ref{eq:dLdU}) and (\ref{eq:dWndUn}), which shows that $\frac{\partial  L}{\partial  \mathbf{U}^{(n)}  }$ is dependent on $\mathcal{G}_{(n)}$, and hence $\mathcal{W}_{(n)}$. Also, the unfolding of the weight tensor along mode-$n$ directly reveals the connection with mode-$n$ of the input data $\tn{X}$, and it is therefore natural that each $\frac{\partial  L}{\partial  \mathbf{U}^{(n)}  }$ has modal information embedded within. This conclusion could not have been reached without a solid underlying mathematical analysis.
\end{remark}
%
%
\begin{figure}[t]
	\centering
	\includegraphics[width=0.8\linewidth,  trim={0cm 0cm 0cm 0cm}, clip]{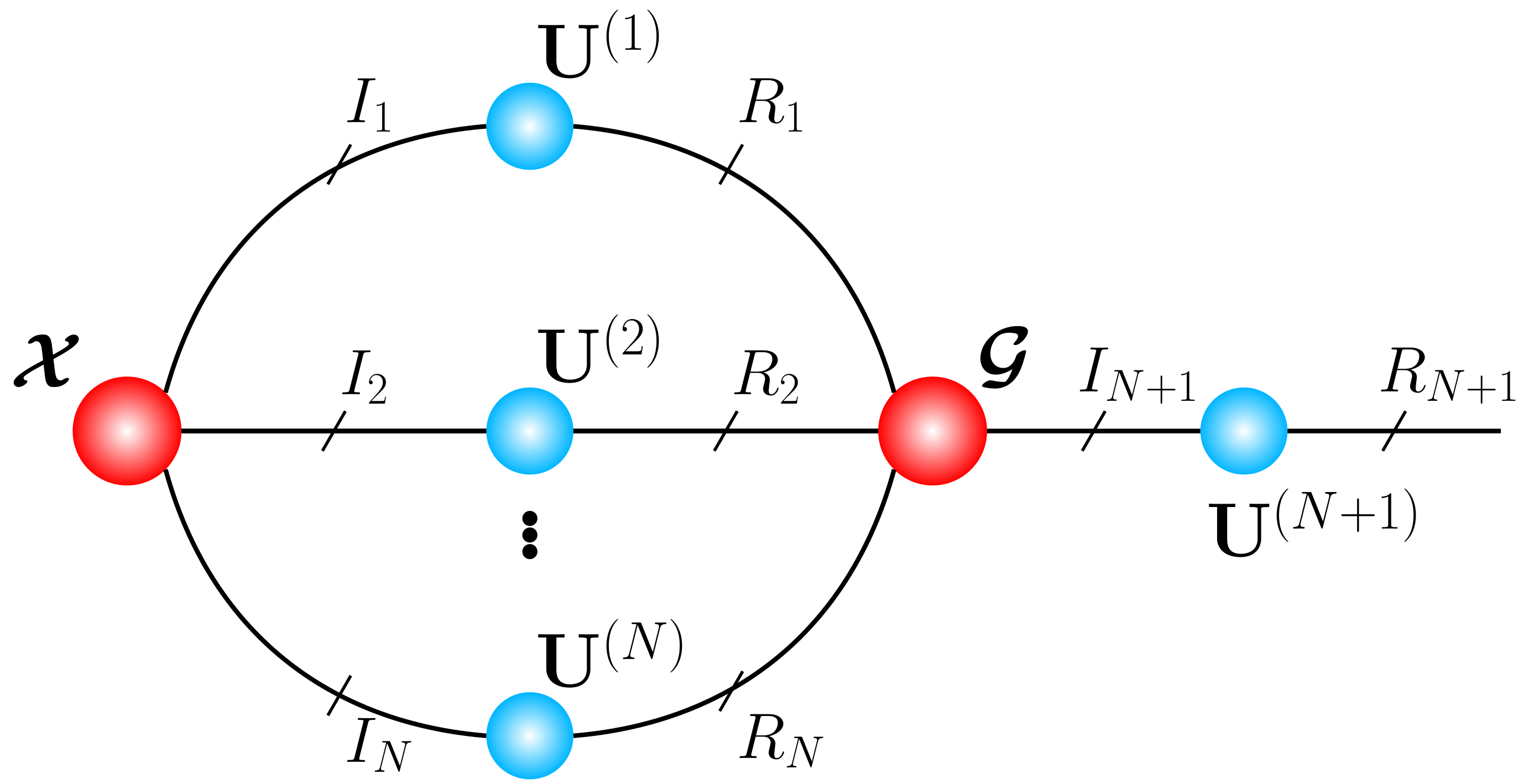}
	\caption{Graphical representation of Equation (\ref{eq:equivalence}). \vspace{-5mm}}
	\label{fig:equivalence}
\end{figure}
\vspace{-5mm}

\subsection{Synthetic Data}
To provide more concrete intuition into the above discussion, we evaluated our model on two synthetic datasets, referred to as Synthetic Dataset 1 (SD1), and Synthetic Dataset 2 (SD2). Each dataset was composed of $28\times 28$ grey-scale images ($2$-nd order tensors), which were white everywhere except for randomly selected rows for SD1 and randomly selected columns for SD2, designated black. Examples of images from these datasets are shown in Figure \ref{fig:synth}.

\begin{figure}[b]
	\centering
	\includegraphics[width=0.6\linewidth,  trim={0 3cm 0 4.5cm}, clip]{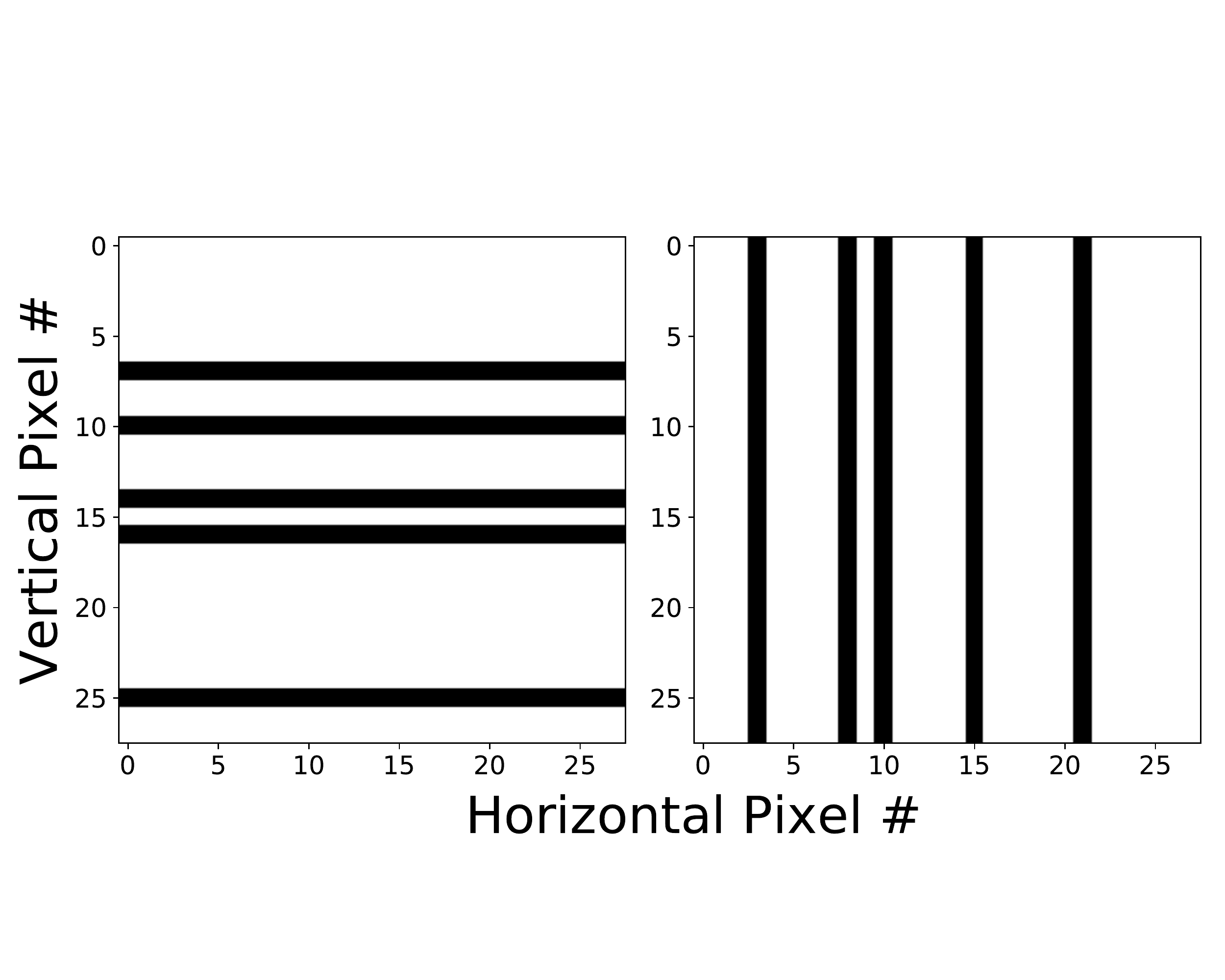}
	\caption{Sample images from two synthetic datasets. \textit{Left}: Synthetic Dataset 1. \textit{Right}: Synthetic Dataset 2.}
	\label{fig:synth}
\end{figure}

\begin{figure*}[t]
	\centering
	\includegraphics[width=1\linewidth,  trim={0 0cm 0 0cm}, clip]{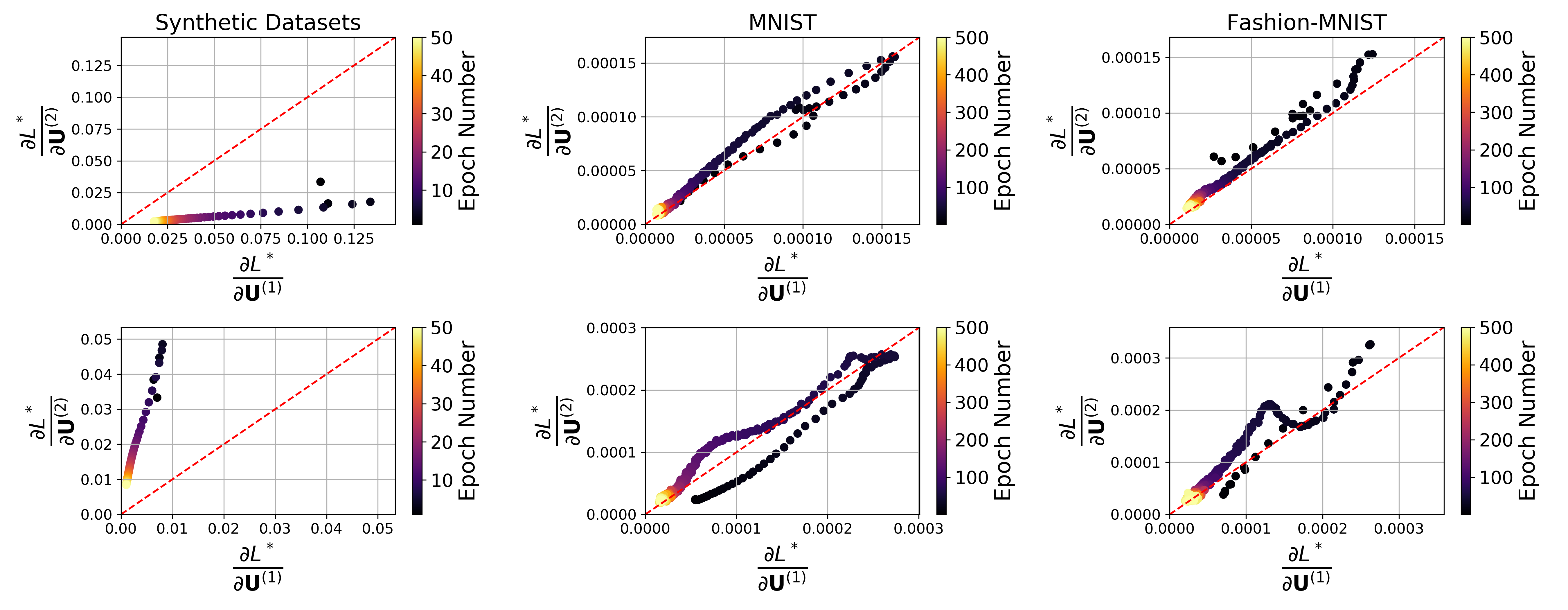}
	\caption{Gradients across epochs -- TTL applied to various datasets. \textit{Left column. Top row:}  SD1; \textit{bottom row:} SD2. The disparity in modal structural information is reflected in the magnitude of the normalized gradients. \textit{Middle column. Top row:} MNIST and TTL with $\tn{G} \in \mathbb{R}^{10 \times 10 \times 30}$; \textit{bottom row:} MNIST and TTL with $\tn{G} \in \mathbb{R}^{5 \times 5 \times 10}$. No difference in modal structural information is observed. \textit{Right column. Top row:}  Fashion-MNIST and TTL with  $\tn{G} \in \mathbb{R}^{10\times 10 \times 30}$; \textit{bottom row:}  Fashion-MNIST and TTL with  $\tn{G} \in \mathbb{R}^{5\times 5 \times 10}$. Results suggest that the second mode carries more structural information.}
	\label{fig:sims}
\end{figure*}

Such a structure was chosen for SD1 and SD2 to ensure that one tensor mode is far richer in structure than the other. It is therefore expected that $\frac{\partial  L}{\partial  \mathbf{U}^{(1)}}$ will be more important than $\frac{\partial  L}{\partial  \mathbf{U}^{(2)}}$ for SD1, and vice-versa for SD2. For quantitative assessment, at each epoch we stored the normalized Frobenius norm of each gradient, defined as 

\begin{equation}
	\frac{\partial L^*}{\partial  \mathbf{U}^{(n)}} = \bigg|\bigg| \frac{\partial  L}{\partial  \mathbf{U}^{(n)}} \bigg|\bigg|_F\bigg/(I_n R_n)
\end{equation}

%

Simulations were run for 50 epochs and the results are shown in the left-most column of Figure \ref{fig:sims}, which suggest that the proposed model is capable of capturing well the relative importance of the modal structure within data. In the case of SD1, mode-1 is richer in structure, and this is indicated by the fact that $\frac{\partial L^*}{\partial  \mathbf{U}^{(1)}} > \frac{\partial L^*}{\partial  \mathbf{U}^{(2)}}$ for all epochs. The converse is true for SD2. This demonstrates that the TTL implicitly resolves the notorious black-box nature of NNs, by offering physically meaningful information inferred from the gradients of the trained weights.

%

\subsection{MNIST Dataset}\label{sec:mnist}
We now demonstrate the desirable compression properties of the TTL by applying it to the MNIST dataset \cite{Lecun1998} for the task of handwritten digit recognition.  The MNIST dataset is composed of 60000 $28 \times 28$ greyscale images for training and 10000 for testing.  We used a neural network with 1 hidden layer and a rectified linear unit (ReLU) activation function. The hidden layer was of 300 neurons in size, and in our investigation it was replaced by the proposed TTL. The weight tensor, $\tn{W} \in \mathbb{R}^{28\times 28 \times 300}$, is in uncompressed format if its core $\tn{G} \in \mathbb{R}^{28 \times 28 \times 300}$. The compression factor (CF) was computed as the ratio of the number of elements of a ``full" core tensor and factor matrices to the number of elements in a compressed core and its respective matrices. Table \ref{table:MNIST} shows results for different CFs, over a training period of 500 epochs. 

The TTL achieved a compression factor (CF) of 18.73 with accuracy of $95.6\%$, and CF of 66.63 with accuracy of $93.3\%$, corresponding to a decrease of only slightly over $2\%$ from the original uncompressed network. Figure \ref{fig:sims} shows scatter plots of the normalized gradients, $\frac{\partial  L^*}{\partial  \mathbf{U}^{(1)}}$ and $\frac{\partial  L^*}{\partial  \mathbf{U}^{(2)}}$, for both the compression factors in Table \ref{table:MNIST}. Observe that, for MNIST, both modes have comparable structural significance, as suggested by a simple visual perspective on the images (white numbers on black background). 

\subsection{Fashion-MNIST Dataset}\label{sec:fmnist}

\begin{table}[b]
	\centering
	\renewcommand{\arraystretch}{1.5}
	\caption{TTL performance for various Compression Factors (CF).}
	\label{table:MNIST}
	\begin{tabular}{cccc}
		\hline
		\textbf{Dataset}                                                                                       & \textbf{CF}              & \textbf{Core Size}                            & \textbf{Accuracy (\%)} \\ \hline
		\multicolumn{1}{c|}{\multirow{3}{*}{{MNIST}}}                                                   & \multicolumn{1}{c|}{1}   & \multicolumn{1}{c|}{$28\times 28 \times 300$} & $95.9\%$               \\ \cline{2-4} 
		\multicolumn{1}{c|}{}                                                                                  & \multicolumn{1}{c|}{18.73}  & \multicolumn{1}{c|}{$10 \times 10 \times 30$} & $95.6\%$               \\ \cline{2-4} 
		\multicolumn{1}{c|}{}                                                                                  & \multicolumn{1}{c|}{66.63} & \multicolumn{1}{c|}{$5 \times 5 \times 10$}   & $93.3\%$               \\ \hline
		\multicolumn{1}{c|}{\multirow{3}{*}{{\begin{tabular}[c]{@{}c@{}}Fashion\\ MNIST\end{tabular}}}} & \multicolumn{1}{c|}{1}   & \multicolumn{1}{c|}{$28\times 28 \times 300$} & $86.3\%$               \\ \cline{2-4} 
		\multicolumn{1}{c|}{}                                                                                  & \multicolumn{1}{c|}{18.73}  & \multicolumn{1}{c|}{$10\times 10 \times 30$}  & $85.4\%$               \\ \cline{2-4} 
		\multicolumn{1}{c|}{}                                                                                  & \multicolumn{1}{c|}{66.63} & \multicolumn{1}{c|}{$5\times 5 \times 10$}    & $82.3\%$             
	\end{tabular}
	\vspace{-3mm}
\end{table}

\begin{figure*}[t!]
	\centering
	\includegraphics[width=1\linewidth,  trim={0 0cm 0 0cm}, clip]{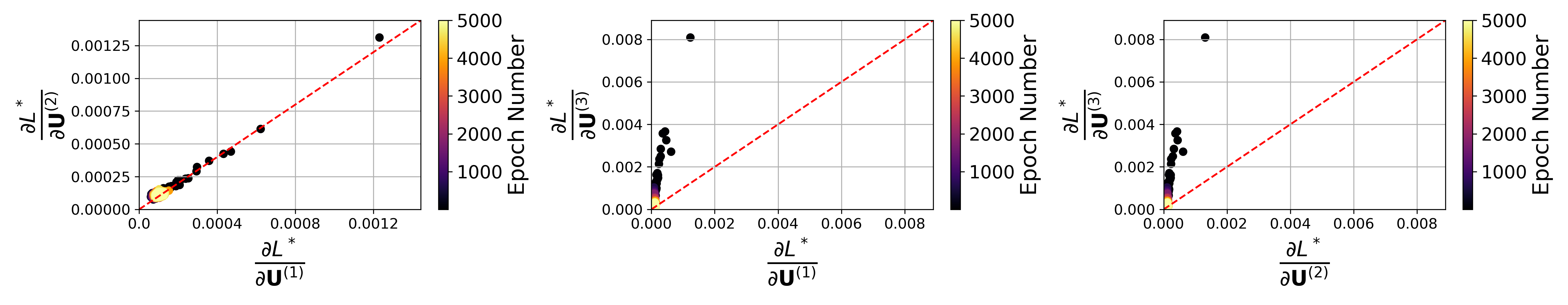}
	\caption{Gradients across epochs -- TTL applied to CIFAR-10. A TTL with a core tensor $\tn{G} \in \mathbb{R}^{10 \times 10 \times 3 \times 10}$ is employed, to unveil more structural information within mode-3 than within the other modes, thus indicating that the color scheme of the images within CIFAR-10 plays a major role in classification tasks.}
	\label{fig:cifar}
\end{figure*}

We further evaluated the proposed TTL model on the Fashion-MNIST dataset \cite{Xiao2017}, which is intended to serve as a direct, more complex replacement of the original MNIST. Fashion-MNIST consists of $28\times 28$ greyscale images depicting fashion products belonging to a total of 10 different classes. The training and testing sets consist of 60000 and 10000 images, respectively. We employed a neural network with 2 hidden layers of sizes 300 ($l=1$) and 200 ($l=2$), and ReLU activation functions, replaced the layer $l=1$ with our TTL, and the layer was compressed by the same degree as for MNIST. The uncompressed network attained an accuracy of $86.3\%$, while, at a loss of less than $4\%$ accuracy, the TTL achieved a CF of 66.63, obtaining a $82.3\%$ performance.

Differently from MNIST, Fashion-MNIST entails a disparity in modal structural information. For example, a coat and a dress occupy roughly the same vertical space on the images (we refer to \cite{Xiao2017}), while the difference between the two products is revealed horizontally -- mode-$1$ and mode-$2$. This disparity implies that mode-$2$ should play a relatively more important role  than mode-$1$ in training. This is reflected in Figure \ref{fig:sims}, where it is shown that, on average, $\frac{\partial L^*}{\partial  \mathbf{U}^{(1)}} > \frac{\partial L^*}{\partial  \mathbf{U}^{(2)}}$. This further supports our arguments in Section \ref{sec:int}, that, within the proposed TTL, each gradient $\frac{\partial  L }{\partial    \mathbf{U}^{(n)}  }$ is directly intertwined with its corresponding mode-$n$. Some gradients will be more informative about the training process, depending on which data mode carries more structure. This is made explicit owing to the analytical derivation of back-propagation, in terms of the TKD, which allowed us to leverage on, and gain insight from the expressive power of tensors.

\subsection{CIFAR-10 Dataset}

\begin{figure}[t]
	\centering
	\includegraphics[width=0.7\linewidth,  trim={5cm 16.8cm 8.5cm 4cm}, clip]{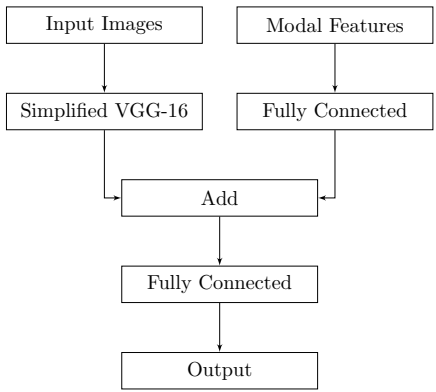}
	\caption{Modal-VGG architecture. Modal features are passed to the network in vector format as expressed in (\ref{eq:mode}), while the simplified VGG-16 is the normal VGG-16 without the last two convolutional layers.}
	\label{fig:structure}
\end{figure}

The same network employed in Sections \ref{sec:mnist}-\ref{sec:fmnist} was applied on the CIFAR-10 dataset \cite{Krizhevsky2009}, with the TTL substituting the first hidden layer comprised of 300 nodes. Since CIFAR-10 is composed of $32 \times 32 \times 3 $ RGB images (50000 for training), the decomposed uncompressed weight tensor corresponds to a core of size $\tn{G} \in \mathbb{R}^{32 \times 32 \times 3 \times 300}$. We compressed the layer by a factor of 152.45 by employing a core of size $\tn{G} \in \mathbb{R}^{10 \times 10 \times 3 \times 10}$. Simulations were ran for both the uncompressed and compressed networks, achieving accuracy rates of 53.8\% and 48.03\% respectively. Such a drop in performance compared to the MNIST and the Fashion-MNIST datasets was expected as the CIFAR-10 dataset is naturally more complex. However, by virtue of the proposed framework, it was possible to analyze the relative importance of the modal structure within the NN. Indeed, Fig. \ref{fig:cifar} shows that mode-3, corresponding to RGB information, is more significant than the other two for classification purposes; in turn, it can be used as an additional input feature to more complex networks.

This was achieved by combining the well-known VGG-16 convolutional network \cite{Simonyan2014} with a simple network consisting of a single fully connected layer, of 512 units, which receives as input  modal information for each sample image, as illustrated in Fig. \ref{fig:structure}. We refer to this model as Modal-VGG. Specifically, for each image $\tn{X} \in \mathbb{R}^{32 \times 32 \times 3}$ we fed the vectorized $3$-rd mode covariance, that is
\begin{equation}\label{eq:mode}
	\mathbf{x} = \text{vec}(\mathcal{X}_{(3)}^T \mathcal{X}_{(3)}) \in \mathbb{R}^9
\end{equation}
To highlight the significance of the $3$-rd mode, the VGG-16 was first simplified by removing its last two convolutional layers, then trained on CIFAR-10 in two ways: (i) alone and (ii) in combination with another network which analyzes modal information. The outputs of the first fully connected layer of the simplified VGG-16 were added to those of the second network, before being fed through a final fully connected layer. 

\begin{remark}
	The Modal-VGG is an example of a straightforward methodology which is very useful in practice. Namely, if simple networks fail, instead of discarding completely their results, the TTL offers opportunities to retain some information of their training, which can then be fed to more complex networks. This, in turn, allows large NNs to be simplified and reduce their computational costs thanks to the so performed data augmentation.  
\end{remark}

Simulations were ran on an NVIDIA GeForce RTX 2080 Ti GPU. TABLE \ref{table:CIFAR} shows that, for case (i) above, training speed was increased by over 12\%, but the model accuracy decreased from 91.3\% (for the full VGG-16) to 86.2\%. The benefits of the proposed framework are evident in case (ii), where augmenting the data with the relevant modal features allowed for the accuracy to increase to 89.2\%, while still maintaining about a 10\% reduction in overall training time when compared to the full VGG-16 model. For rigour, the same experiment was carried out on the $1$-st and $2$-nd mode vectorized covariances, for which the Modal-VGG in Fig. \ref{fig:structure} attained accuracies of 86.4\% and 86.9\%, respectively. This further highlights the importance of mode-$3$ for the CIFAR-10 dataset and demonstrates that the TTL can indeed capture modal information, which, in turn, may be exploited to enhance physical meaning in training and reduce training time while at the same time maintaining comparable performance.

\begin{table}[]
	\centering
	\renewcommand{\arraystretch}{1.3}
	\caption{Modal-VGG performance.}
	\label{table:CIFAR}
	\begin{tabular}{ccc}
		\hline
		\textbf{Network}                                                                                        & \textbf{Accuracy (\%)}      & \textbf{\begin{tabular}[c]{@{}c@{}}Training Time\\ Speed-up (\%)\end{tabular}} \\ \hline
		\multicolumn{1}{c|}{{\begin{tabular}[c]{@{}c@{}}Original VGG-16;\\ {} \end{tabular}}}                                                           & \multicolumn{1}{c|}{$91.3$} & --                                                                               \\ \hline
		\multicolumn{1}{c|}{{\begin{tabular}[c]{@{}c@{}}Modal-VGG;\\ no modal information\end{tabular}}} & \multicolumn{1}{c|}{$86.2$} & $\sim 12$                                                                        \\ \hline
		\multicolumn{1}{c|}{{\begin{tabular}[c]{@{}c@{}}Modal-VGG;\\ mode-1\end{tabular}}}               & \multicolumn{1}{c|}{$86.4$} & $\sim 10$                                                                        \\ \hline
		\multicolumn{1}{c|}{{\begin{tabular}[c]{@{}c@{}}Modal-VGG;\\ mode-2\end{tabular}}}               & \multicolumn{1}{c|}{$86.9$} & $\sim 10$                                                                        \\ \hline
		\multicolumn{1}{c|}{{\begin{tabular}[c]{@{}c@{}}Modal-VGG;\\ mode-3\end{tabular}}}               & \multicolumn{1}{c|}{$89.2$} & $\sim 10$                                                                       
	\end{tabular}
\vspace{-2mm}
\end{table}

%
%

%
%

\vspace{-2mm}
\section{Conclusions}

We have introduced the Tucker Tensor Layer (TTL) as an alternative to the dense weight matrices of fully connected layers in Deep  Neural Networks (DNNs). This has been achieved by treating the weight-matrix as an unfolding of a higher-order tensor, followed by its decomposition and compression via the Tucker decomposition (TKD). By extending the notion of matrix derivatives to tensors, we have derived the tensor-valued back-propagation within our proposed framework and have demonstrated physical interpretability of the proposed model. Experiments on synthetic data, Fashion-MNIST and CIFAR-10, show that, owing to the expressive power of tensors and in particular the TKD, the proposed framework has enabled us to gain physical insights into the network training process, by providing explicit information on the relative importance of each data mode in training. The exploitation of the relative importance of data modes has enabled us to train a data-augmented network comprising a simplified version of the VGG-16, which resulted in a 10\% decrease in training time, at a small cost in terms of final accuracy. Moreover, the compression benefits of our model have been demonstrated on both the MNIST and Fashion-MNIST datasets, with the TTL achieving a 66.63 fold compression while maintaining comparable performance to the uncompressed NN. This further illustrates the ability of the proposed TTL to help resolve the two main bottlenecks of NNs, namely their black-box nature and more efficient training of its millions of parameters.


\ifCLASSOPTIONcaptionsoff
\newpage
\fi



%
\bibliographystyle{IEEEtran}
\vspace{30mm}
\bibliography{references}

%




\end{document}